\def\eqref#1{equation~\ref{#1}}
\def\1{\bm{1}}
\DeclareMathAlphabet{\mathsfit}{\encodingdefault}{\sfdefault}{m}{sl}
\SetMathAlphabet{\mathsfit}{bold}{\encodingdefault}{\sfdefault}{bx}{n}
\DeclareMathOperator*{\argmax}{arg\,max}
\newcommand{\LRLM}{\mbox{Huginn-3.5B}\xspace}
\newcommand{\LTO}{\mbox{LTO}\xspace}
\newcommand{\LatentThinkingOptimization}{\mbox{Latent Thinking Optimization}\xspace}
\newcommand{\LRM}{\mbox{LRM}\xspace}
\newcommand{\piref}{\pi_\text{ref}}
\newcommand{\kldiv}{\mathbb{D}_{\textrm{KL}}}
\newtheorem{theorem}{Theorem}
\newtheorem{definition}{Definition}
\newtcolorbox{AIBox}[2][]{aibox,title=#2,#1}
\algrenewcommand\Require{\State \textbf{Input: }}
\algrenewcommand\Ensure{\State \textbf{Output: }}
\title{Latent Thinking Optimization: Your Latent Reasoning Language Model Secretly Encodes Reward Signals in Its Latent Thoughts}
\author{Hanwen Du\textsuperscript{$\clubsuit$} \ Yuxin Dong\textsuperscript{$\clubsuit$}\ Xia Ning\textsuperscript{$\clubsuit$}\textsuperscript{$\spadesuit$}\textsuperscript{$\heartsuit$}\Letter \\
\textsuperscript{$\clubsuit$}Department of Computer Science
and Engineering, The Ohio State University, USA\\
\textsuperscript{$\spadesuit$}Department of Biomedical Informatics, The Ohio State University, USA\\
\textsuperscript{$\heartsuit$}Translational Data Analytics Institute, The Ohio State University, USA\\
\texttt{\{du.1128, dong.1357, ning.104\}@osu.edu}}
\begin{document}

\maketitle

\begin{abstract}
Large Language Models (LLMs) excel at problem solving by generating chain of thoughts in natural language, but such \textit{verbal thinking} is computationally costly and prone to overthinking. A recent work instead proposes a \textit{latent thinking} architecture, \LRLM, which represents intermediate reasoning steps as a sequence of latent representations. However, latent thoughts lack interpretability and are difficult to supervise, raising concerns about the correctness and reliability of the model's latent thinking processes.
In this paper, we provide a systematic study of how \LRLM thinks in the latent space and how external supervision signals can improve its latent thinking processes. We show that latent thoughts leading to correct versus incorrect answers exhibit highly distinguishable patterns, and that a latent classifier can reliably predict answer correctness directly from latent thoughts. Leveraging these insights, we propose Latent Thinking Optimization (LTO), a probabilistic algorithm that employs the latent classifier as a Latent Reward Model (LRM) to optimize the latent thinking processes. Extensive experiments across diverse reasoning tasks demonstrate that LRM is highly effective in detecting incorrect latent thinking patterns, and LTO can significantly improve the latent thinking processes.
Furthermore, we show that LRM can generalize across diverse domains, and LTO can be seamlessly applied to general LLMs to improve their thinking processes.
In contrast to verbal thinking, our method demonstrates that reward modeling and scaling test-time thinking with supervision can be performed directly in the latent space, highlighting its potential as a general, efficient, and domain-agnostic approach to improving the thinking processes of LLMs.

\end{abstract}

\section{Introduction}




Large Language Models (LLMs)~\citep{openai2023gpt,bai2023qwen,touvron2023llama,team2023gemini} have demonstrated impressive problem-solving abilities by generating natural language as a form of thinking and reasoning\footnote{In this paper, we use the terms ``thinking'' and ``reasoning'' interchangeably to refer to the process by which an LLM generates intermediate steps or latent thoughts toward an answer.}~\citep{wei2022chain,kojima2022large,yao2023tree}.
This ability to ``think'' enables them to solve a variety of complex tasks, such as math~\citep{lightman2024lets,gao2023pal}, coding~\citep{li2022competition,nijkamp2023codegen}, and embodied planning~\citep{shinn2023reflexion,hao2023reasoning}.
However, generating the whole thinking process in natural language is very costly and prone to the overthinking issue where LLMs output redundant or misleading thoughts that degrade both accuracy and efficiency~\citep{sui2025stop,chen2025do}.

In contrast, humans think largely through internal latent representations---compact, abstract mental codes that capture abstract concepts and hidden structures~\citep{quiroga2005invariant,mishchanchuk2024hidden}.
Such a latent thinking process is highly efficient as it avoids the need to verbalize every intermediate step, and is well-suited for reasoning with abstract logic or concepts that are often difficult to convey through natural language. 
Motivated by this, a recent work explores modeling the thinking process as a sequence of latent representations (i.e., latent thoughts) and proposes a new latent reasoning language model \LRLM~\citep{geiping2025scaling}, where each latent thought corresponds to a thinking step. 
These latent thoughts form a latent reasoning chain that enable the model to reason effectively in the latent space and achieve impressive performance across a variety of reasoning tasks.

Despite promising, such latent thinking architecture faces a major challenge: it lacks interpretability and supervision. Unlike verbal thinking, where each intermediate step can be inspected and evaluated~\citep{wang-etal-2024-math}, latent thinking is encoded in internal hidden states that are hard to interpret. 
This makes it difficult to understand what the model is actually thinking about or to verify its correctness. Furthermore, the model is trained to generate these latent thoughts in an unsupervised manner without explicit supervision or reward signals that can indicate what a ``good'' latent thought is. This raises concerns on whether the model is truly learning to think in the latent space, or simply memorizing the answers using the parameters of latent representations~\citep{wang2025generalization}.

In this paper, we aim to understand how \LRLM thinks in the latent space and how external supervision signals can improve its latent thinking process. 
Specifically, we observe that latent thinking trajectories (i.e., sequences of latent thoughts) that lead to correct versus incorrect answers exhibit distinct patterns. 
To further investigate this, we train a latent classifier to predict answer correctness from the latent thinking trajectories, and observe that it can reliably distinguish between correct and incorrect trajectories, even for partial trajectories with just the first few thinking steps.

Building on these insights, we formulate latent thinking improvement as a reward optimization problem over latent policies, and propose a \LatentThinkingOptimization (\LTO) algorithm that uses the latent classifier as a Latent Reward Model (\LRM) to sample latent thinking trajectories with a higher estimated likelihood of correctness. \LTO is theoretically guaranteed to improve the expected correctness rate and empirically yields significant gains across a range of challenging reasoning tasks. 

While we use \LRLM as a starting point to understand the latent thinking processes, the proposed \LRM and \LTO extend naturally to general LLMs.
Although general LLMs do not explicitly incorporate latent thinking, their latent representations across multiple layers can be interpreted as latent chain of thoughts~\citep{wang2025latent}.
Under this view, \LRM and \LTO can be readily applied to general LLMs.
In our experiments, we demonstrate that the latent thoughts from general LLMs also encode appropriate reward signals and \LTO can significantly improve the performance of general LLMs on diverse reasoning tasks using these {\LRM}s.
Furthermore, we show that \LRM exhibits strong cross-domain generalization even with a small amount of training data, highlighting its potential as an efficient and generalist reward model in the latent space.
In contrast to verbal thinking approaches that scale test-time compute through natural language generation~\citep{guo2025deepseek,muennighoff2025s1}, our method demonstrates that reward modeling and scaling test-time thinking with supervision can be performed directly in the latent space, highlighting its potential as a general, efficient, and domain-agnostic approach to improving the thinking processes of LLMs.


\section{Definitions and Notations of Reasoning LLMs}\label{sec:latent_thinking_llms}
Given a question $x\sim\mathcal{D}$ sampled from the dataset $\mathcal{D}$, a language model $\pi(\cdot)$ can directly generate an answer by sampling from $y \sim \pi(y \mid x)$. For complex questions, however, it is often beneficial to introduce intermediate reasoning steps $z$ to represent the model’s thinking process. 
In this case, the model first thinks by sampling from $z \sim \pi(z \mid x)$ and then generates the final answer conditioned on $z$, that is, $y \sim \pi(y \mid z)$. Empirically, this two-stage generation process often improves the answer correctness rate, as generating 
$z$ allows the model to decompose a complex problem into simpler subproblems that are easier to solve and verify~\citep{wei2022chain,kojima2022large}.
\vspace{-0.1cm}
\paragraph*{Verbal Thinking}
\vspace{-0.1cm}
Common reasoning LLMs~\citep{li2025system} represent $z$ as a sequence of reasoning steps~\citep{wei2022chain} in natural language, that is, $z=(e_1,\cdots, e_t, \cdots, e_T)$, where each $e_t$ is a chunk of text that corresponds to a specific step in the reasoning process. 
However, generating all the reasoning steps in natural language
introduces significant computational overhead, and increases the risk of overthinking where the model generates unnecessarily verbose or logically inconsistent reasoning chains that lead to incorrect answers~\citep{chen2025do,sui2025stop}.
\vspace{-0.1cm}
\paragraph*{Latent Thinking}
\vspace{-0.1cm}
To address the limitations of verbal thinking, inspired by the human cognitive theory, a recent work proposes a latent reasoning language model Huginn-3.5B~\citep{geiping2025scaling} which represents the sequence of reasoning steps as a sequence of internal hidden states $z=(\mathbf{h}_1,\cdots, \mathbf{h}_t, \cdots, \mathbf{h}_T)$ (i.e., a latent thinking trajectory). Each $\mathbf{h}_t\in\mathbb{R}^{L\times d}$ represents a latent reasoning step (i.e., a latent thought), where $L$ is the number of tokens in the output $y$, $d$ is the hidden dimensionality.
The number of thinking steps $T$ is set as 32 by default, and can vary according to the computation budget.
The initial latent thought $\mathbf{h}_0\sim\mathcal{N}(\mathbf{0},\sigma^2\mathbb{I}_{L\cdot d})$ is sampled from random Gaussian noise with the standard deviation $\sigma$, and a recurrent block is introduced to generate the latent thoughts $\mathbf{h}_{1:T}$ recursively conditioned on the question $x$. A lightweight decoding module generates the answer $y$ in natural language conditioned on the last latent thought $\mathbf{h}_T$.
%
%
Because both chain‑of-thought reasoning and recurrent architectures can be conceptualized as finite-state automata~\citep{svete2023recurrent,zhang2024autoregressive}, this approach can be viewed as generating the chain of thoughts in the latent space without the need for verbose reasoning. 
While efficient, it is difficult to trace the model’s logic or provide step-level supervision due to the lack of interpretable patterns in the latent space.
\section{Decipher How \LRLM Thinks in the Latent Space}
\label{sec:understand}
Latent thoughts are hidden states and may not have an intrinsic notion of ``correctness'' themselves. To determine what constitutes a ``good'' or ``bad'' latent thought, in this paper, we define the correctness of a latent thinking trajectory (latent thinking process) in terms of whether the trajectory (thinking process) leads to a correct answer. This definition provides a reference point for distinguishing ``good'' from ``bad'' latent thoughts and enables us to systematically investigate whether these trajectories exhibit distinct patterns in latent space.
It is also consistent with the idea of process reward models~\citep{wang-etal-2024-math,lu2024autopsv}, where the correctness of intermediate reasoning steps is labeled based on their relation with the final answer.

To understand the latent thinking processes, we consider an interesting research question:
\begin{tcolorbox}[enhanced,
    colframe=blue!40!black,
    colback=blue!2!white,
    fonttitle=\bfseries,
  attach boxed title to top text left={xshift=30mm,yshift=-2.5mm},
  boxed title style={size=small,colframe=blue!40!black,colback=blue!40!black}]
\textbf{Research Question:} Do latent thoughts that lead to correct answers exhibit different patterns in latent space compared to those leading to incorrect answers?
\end{tcolorbox}
If differences exist, they would not only provide insights into how \LRLM encodes abstract concepts during its thinking process, but also provide a foundation for detecting and correcting thinking errors directly in the latent space.
%
\subsection{Visualization of Latent Thoughts}\label{sec:visualization_latent_thoughts}
To answer this research question, we select two datasets from different domains: SVAMP~\citep{patel2021nlp} (grade school math) and MBPP~\citep{austin2021program} (python programming). For each problem in these datasets, we randomly sample 100 latent thinking trajectories by sampling from initial latent thought $\mathbf{h}_0$ with different random seeds, and generate the corresponding answers from these latent thoughts. To compare the difference between latent thoughts that lead to correct and incorrect answers, we select those problems that contain both correct and incorrect answers, then visualize and compare their latent thoughts in Figure~\ref{fig:correct_incorrect_trajectory}. We have the following observations:
\vspace{-0.1cm}
\paragraph*{Correct and incorrect latent thoughts exhibit different structures in the latent space.}
\vspace{-0.1cm}
For the same problem, the trajectories of correct and incorrect latent thoughts diverge in both their paths and endpoints, indicating that the model is engaging in different thinking behaviors for correct and incorrect solutions.
Interestingly, the distributions of correct latent thoughts are relatively compact and tend to converge toward consistent solution paths. In contrast, incorrect latent thoughts are more dispersed in the latent space, suggesting that they lack a stable and consistent reasoning pattern.
%
\vspace{-0.1cm}
\paragraph*{Both correct and incorrect latent thoughts exhibit different thinking dynamics at different steps.}
\vspace{-0.1cm}
Latent thoughts from early steps show sharp and abrupt changes.
This suggests that the model is probably doing active computation and exploratory reasoning, which might involve cognitive behaviors such as searching or backtracking~\citep{gandhi2025cognitive} that are helpful for problem solving.
Latent thoughts evolve more smoothly in the middle steps, suggesting that the model is probably finetuning its thinking process for iterative refinement~\citep{madaan2023self}.
In the last few steps, latent thoughts almost converge, indicating that the thinking process is complete and a conclusion is reached.
These patterns suggest that the latent space effectively captures the progression of the thinking dynamics, with distinct behaviors emerging at different steps.
\vspace{-0.1cm}
\paragraph*{Distinct thinking patterns emerge for different types of problems.}
\vspace{-0.1cm}
Latent thoughts from math problems display different thinking patterns from those observed in programming problems. 
Within the same dataset, the model also generates latent thoughts with different patterns for different types of problems. 
Notably,
convergence of latent thoughts is faster on math problems, which typically require only two to three arithmetic computations~\citep{patel2021nlp}. By comparison, the latent thoughts take more steps to converge for programming problems, which are more difficult and require longer reasoning steps~\citep{austin2021program}. 
These observations indicate that the model can flexibly adjust its thinking strategy in response to different problem types and difficulty levels.
\begin{figure*}[t]
    \centering
\begin{subfigure}[t]{0.49\textwidth}
    \centering
    \includegraphics[width=\textwidth]{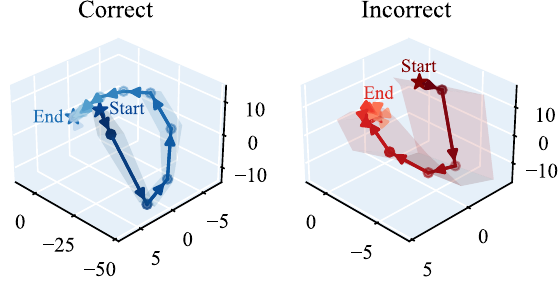}
    \caption{Problem \#26 from SVAMP.}
    \label{fig:correct_incorrect_trajectory_svamp_26}
    \end{subfigure}
        \begin{subfigure}[t]{0.49\textwidth}
    \centering
    \includegraphics[width=\textwidth]{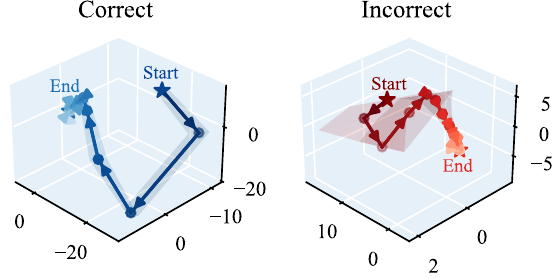}
    \caption{Problem \#97 from MBPP.}
    \label{fig:correct_incorrect_trajectory_mbpp_97}
    \end{subfigure} 
        \begin{subfigure}[t]{0.49\textwidth}
    \centering
    \includegraphics[width=\textwidth]{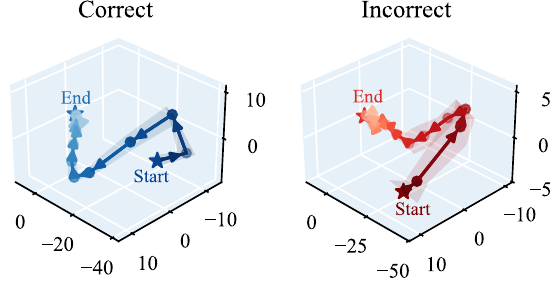}
    \caption{Problem \#156 from SVAMP.}
    \label{fig:correct_incorrect_trajectory_svamp_156}
    \end{subfigure} 
        \begin{subfigure}[t]{0.49\textwidth}
    \centering
    \includegraphics[width=\textwidth]{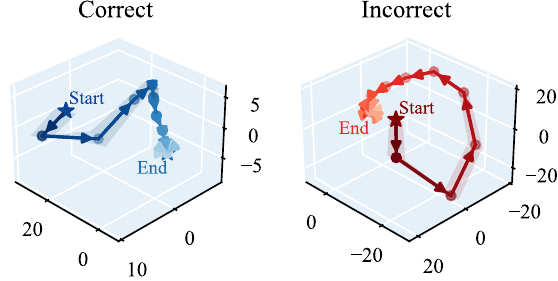}
    \caption{Problem \#105 from MBPP.}
    \label{fig:correct_incorrect_trajectory_mbpp_105}
    \end{subfigure} 
    
    \caption{Visualization of the distribution of the \textcolor[HTML]{5E88B7}{correct} and \textcolor[HTML]{C84D50}{incorrect} latent thoughts projected onto 3D space using PCA for dimension reduction. The arrows along the lines indicate the progression from the current step to the next step of the latent thought. More examples are in Appendix~\ref{appendix:examples_latent_thoughts}.}
    \label{fig:correct_incorrect_trajectory}
\end{figure*} 
\subsection{Qualitative and Quantitative Analyses on Latent Thoughts}\label{sec:representation_quality_metrics}
The case studies in Section~\ref{sec:visualization_latent_thoughts} demonstrate that the model is engaging in different thinking behaviors for latent thoughts that lead to correct and incorrect answers. 
To gain a deeper understanding of its thinking processes, we evaluate the latent thoughts at different thinking steps using four metrics that measure the quality of latent representations from the perspective of information content (Entropy, Effective Rank) and geometric structure (Anisotropy, Intrinsic Dimension). These metrics are calculated using all the samples from each dataset.
\begin{itemize}[noitemsep,nolistsep,leftmargin=*]
\item \textbf{Entropy}~\citep{skean2025layer} quantifies how much information content the latent representations carry. A higher entropy indicates the latent representations contain diverse, more informative features, while a lower entropy suggests the existence of redundant information.
\item \textbf{Effective Rank}~\citep{wei2024DiffeRank} measures how the dimensionality of the latent representation effectively shrinks under strong compression.
A higher effective rank implies noisy features, while a lower effective rank indicates better noise reduction and more compact representations.

\item \textbf{Anisotropy}~\citep{razzhigaev2024shape} measures the non-uniformity of a distribution in the latent space. A higher anisotropy
suggests that representations are more directed
in specific orientations, while a lower anisotropy indicates that the representations are spread out more evenly.
\item \textbf{Intrinsic Dimension}~\citep{facco2017estimating,cheng2025emergence} quantifies the minimal number of coordinates required to describe the local geometric structure of the representations without significant information loss. A higher intrinsic dimension indicates a rich, complex latent structure, while a lower intrinsic dimension suggests the representation lies on a simpler manifold.

\end{itemize}
\begin{figure*}
    \centering
    \begin{subfigure}{\textwidth}
    \centering
    \includegraphics[width=\textwidth]{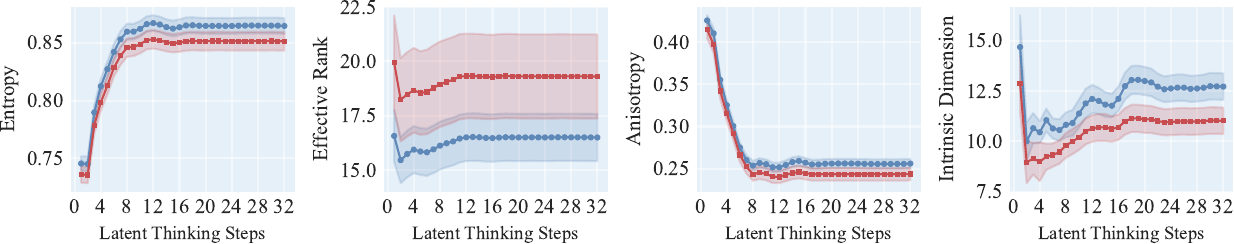}
    \caption{Distribution of representation quality metrics across 32 steps of the latent thoughts on SVAMP.}
    \label{fig:representation_metrics_svamp}
    \end{subfigure}  

        \begin{subfigure}{\textwidth}
    \centering
    \includegraphics[width=\textwidth]{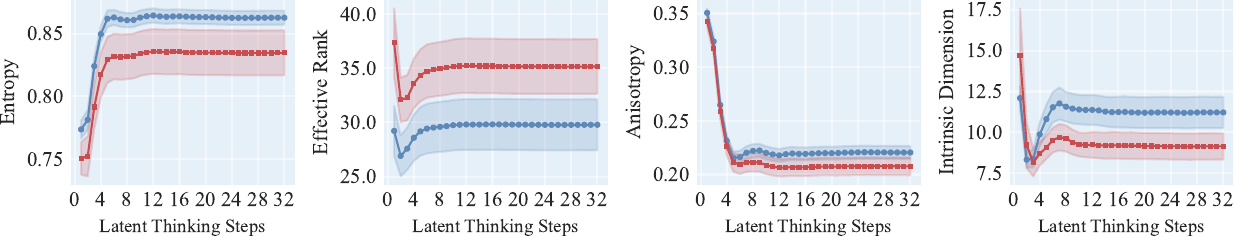}
    \caption{Distribution of representation quality metrics across 32 steps of the latent thoughts on MBPP.}
    \label{fig:representation_metrics_mbpp}
    \end{subfigure} 
    \caption{Representation quality metrics of the latent thoughts on two datasets. The \textcolor[HTML]{5E88B7}{blue} and \textcolor[HTML]{C84D50}{red} distributions represent the distributions for the \textcolor[HTML]{5E88B7}{correct} and \textcolor[HTML]{C84D50}{incorrect} trajectory of latent thoughts, respectively. These metrics are calculated using all the samples from each dataset.}
    \label{fig:representation_quality_metrics}
\end{figure*} 
The calculation details of these metrics are in Appendix~\ref{appendix:representation_quality_metrics}. From the visualization of the representation quality metrics across all the thinking steps in Figure~\ref{fig:representation_quality_metrics}, we have the following observations:


\paragraph*{Correct thinking processes carry richer information with less noise.}
%
The entropy of correct latent thoughts is consistently higher than that of incorrect ones, and the effective rank of correct latent thoughts is consistently lower. 
This suggests that correct thinking processes can preserve richer and more informative features (higher entropy), while reducing noisy components (lower effective rank). 
These observations are consistent with the view of language modeling as a form of compression~\citep{deletang2024language}, where effective thinking of LLMs can be understood as a process of extracting the key concepts while discarding noisy or redundant information.
%
\paragraph*{Correct thinking processes generate more expressive latent representations with structured and complex geometries.}
The anisotropy and the intrinsic dimension of correct latent thoughts are consistently higher than those of incorrect ones.
This suggests that correct latent thoughts align well along informative directions in the latent space, with a richer and more diverse manifold structure capable of capturing task-relevant features~\citep{valeriani2023geometry}.
In contrast, incorrect thoughts collapse into flatter, less organized structures, reflecting a collapse of expressiveness and representational capacity~\citep{ansuini2019intrinsic,cheng2025emergence}.
\paragraph*{Differences in thinking patterns become more distinguishable at later steps.}
At the beginning of the thinking processes, the representation quality metrics change rapidly and show little difference between correct and incorrect latent thoughts.
This probably reflects an exploratory reasoning phase, where the model is actively processing information and has not yet formed a clear solution path. 
As the thinking progresses, these metrics then stabilize and the difference between correct and incorrect thoughts becomes more salient, suggesting that the thinking process has converged to a solution, with the emergence of distinct reasoning patterns between correct and incorrect latent thoughts.
%


\subsection{Latent Thoughts Encode Signals Predictive of Their Correctness}\label{sec:training_latent_classifier}
\begin{figure*}
    \centering
    \begin{subfigure}[t]{0.49\textwidth}
    \centering
    \includegraphics[width=\textwidth]{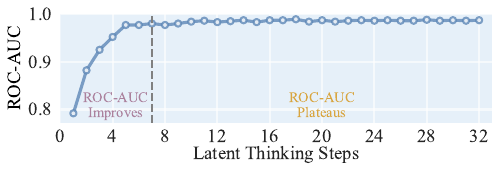}
    \caption{ROC-AUC of the latent classifier on SVAMP.}
    \end{subfigure}
    \begin{subfigure}[t]{0.49\textwidth}
    \centering
    \includegraphics[width=\textwidth]{figures/classifier_roc_auc_MBPP.pdf}
    \caption{ROC-AUC of the latent classifier on MBPP.}
    \end{subfigure}

    \caption{Performance of the latent classifier trained with varying numbers of latent thinking steps on the SVAMP and MBPP datasets. Additional metrics and results are available in Appendix~\ref{appendix:additional_results_classifier}}
    \label{fig:classifier_performance}
\end{figure*}
Empirical results from Section~\ref{sec:visualization_latent_thoughts} and Section~\ref{sec:representation_quality_metrics} demonstrate that the latent thoughts contain rich semantic and geometric features that are predictive of their correctness. If these signals indeed capture the distinction between correct and incorrect thinking processes, they should be discriminative enough for a model to identify their correctness directly from the latent thoughts.
To verify this hypothesis, we follow the widely-used probing technique~\citep{liu2019linguistic,hewitt2019structural}, and train a lightweight sequence classifier to predict the correctness of latent thoughts. 

The classifier takes as input the trajectory of latent thoughts from a problem, and predicts the probability that the thinking process is correct.
For each problem in the training set, we sample 5 different latent thoughts and answers, and train the classifier to predict the correctness of the answer via binary cross-entropy loss. More training details of the latent classifier are in Appendix~\ref{appendix:training_details_latent_classifier}. To study how predictive the latent representations are at different thinking steps, we construct 32 experimental settings for each dataset. In the $t$-th experiment ($1{\leq}t{\leq}32$), the classifier receives the first $t$ steps of latent thoughts $\mathbf{h}_{1:t}$ as input. The maximum of 32 steps is chosen to match the default number of thinking steps in \LRLM. Evaluation is performed on the test set using standard binary classification metrics such as ROC-AUC and Accuracy.
%

The results are shown in Figure~\ref{fig:classifier_performance}. We observe that this latent classifier achieves strong performance on the test set, although it is trained with only a relatively small amount of data. On SVAMP, it achieves an ROC-AUC score close to 1.0, while on MBPP it achieves an ROC-AUC score of around 0.8. These results indicate that latent thoughts encode rich signals that are highly predictive of their correctness.
We also observe that classification performance improves steadily with more thinking steps included, before reaching a plateau. This is consistent with the observation in Section~\ref{sec:representation_quality_metrics} that differences between correct and incorrect thinking patterns become more distinguishable after a few thinking steps. Furthermore, the fact that incorporating latent thoughts from multiple steps improves classification performance suggests that the correctness signal is not solely reflected in a specific step of thought, but also in the evolving dynamics of the whole latent thinking trajectory.
\begin{tcolorbox}[enhanced,
    colframe=blue!40!black,
    colback=blue!2!white,
    fonttitle=\bfseries,
  attach boxed title to top text left={xshift=30mm,yshift=-2.5mm},
  boxed title style={size=small,colframe=blue!40!black,colback=blue!40!black}]
\textbf{Major Observation:} The latent reasoning language model displays distinct thinking patterns between correct and incorrect thinking processes, and such difference is highly distinguishable in the latent space, especially after a few thinking steps.
\end{tcolorbox}
\section{Latent Thinking Optimization}
\label{sec:improve}
Motivated by our observations, we propose Latent Thinking Optimization (\LTO), a probabilistic optimization approach designed to improve the latent thinking processes by selectively sampling trajectories that exhibit correct patterns.
\LTO formulates this as an optimization problem over latent policies, and introduces a probabilistic algorithm to solve the optimization problem.
While \LTO uses \LRLM as a starting point, we further demonstrate that \LTO can also be applied to general LLMs, and achieves strong transferability across diverse domains with high efficiency. These results highlight the potential of \LTO as an effective and scalable approach for optimizing LLM thinking by performing reward modeling and thinking correction directly in the latent space.
\paragraph*{Objective of LTO}
To formalize this, we introduce a binary variable $\mathcal{O}$ that indicates whether the latent thinking trajectory $z$ is correct. Our goal is to find an optimal latent thinking policy $\pi^{*}(z|x)$ such that it maximizes the expectation of generating a correct latent thinking trajectory $z$:
\begin{equation}
\pi^*(z \mid x) = \argmax\nolimits_{\pi(z \mid x)} \mathbb{E}_{z \sim \pi(z \mid x)} p(\mathcal{O} = 1\mid x, z)
\end{equation}
where $p(\mathcal{O}=1|x, z)$ is the probability of a latent thinking trajectory $z$ being correct for a question $x$. Since the classifier introduced in Section~\ref{sec:training_latent_classifier} is trained to predict the correctness of latent thoughts, it can be used as a Latent Reward Model (\LRM) $r(x,z)$ to estimate the probability of $p(\mathcal{O}=1|x, z)$. To ensure that the optimized policy does not deviate significantly from the original policy $\piref(z|x)$ (the latent policy distribution of the model before \LTO optimization), we introduce a KL-regularization term~\citep{jaques2017sequence,ziegler2019fine,rafailov2023direct} with the weight $\beta$ to penalize the difference between the optimized policy $\pi^{*}(z|x)$ and the original policy $\piref(z|x)$. The optimization objective then becomes:
\begin{equation}
    \label{eqn:KL_constrained_reward_optimization}\pi^{*}(z|x)=\argmax\nolimits_{\pi(z|x)}\mathbb{E}_{z\sim\pi(z|x)}\left[r(x,z)\right]-\beta\kldiv(\pi(z|x)||\piref(z|x))
\end{equation}
\paragraph*{The Necessity of KL regularization}
The inclusion of KL regularization with the weight $\beta$ is well motivated from the KL-regularized policy optimization theory. Without KL regularization, the optimized policy might collapse into a degenerated policy that significantly deviates from the base policy. Moreover, if we remove $\beta$ (set $\beta{\rightarrow}0$), \LTO will reduce to best-of-N sampling in the latent space, i.e., we sample N latent trajectories, score them with the \LRM, and pick the highest-scoring one. This strategy only works well if the \LRM is nearly perfect with almost 1.0 classification accuracy for correct and incorrect latent trajectories. In practice, the \LRM is learned and not perfect. Without any regularization, \LTO may exploit the errors of the \LRM and select suboptimal latent thoughts. By including a KL penalty, we constrain the optimized policy so that it does not drift too far from the base policy, thereby preserving the diversity of sampled latent thoughts and mitigating overfitting to reward model noise.
\paragraph*{Probabilistic Sampling}Directly optimizing over the latent policy $\pi(z|x)$ is often difficult. Instead, we approximate $\pi(z|x)$ using a finite set of  $N$ sampled latent thinking trajectories $\{z_i\}^{N}_{i=1}$. In this case, we show that Equation~\ref{eqn:KL_constrained_reward_optimization} has a closed-form solution:
\begin{theorem}\label{thm:sample_distribution_solution}
    Given a sampled set of $\{z_i\}^{N}_{i=1}$ to approximate the policy distribution $\pi^{*}(z|x)$, for each $i$, the solution to Equation~\ref{eqn:KL_constrained_reward_optimization} is 
    $\pi_{r}(z_i|x)=\frac{\piref(z_i\mid x)\exp\left(\frac{1}{\beta}r(x, z_i)\right)}{\sum^{N}_{j=1}\piref(z_j\mid x)\exp\left(\frac{1}{\beta}r(x, z_j)\right)}$.
\end{theorem}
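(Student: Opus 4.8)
The plan is to recognize Equation~\ref{eqn:KL_constrained_reward_optimization} as the standard KL-regularized reward maximization problem, but with the policy restricted to a finite support $\{z_i\}_{i=1}^N$. Concretely, I would replace the optimization over arbitrary distributions $\pi(z\mid x)$ by an optimization over the probability simplex: write $p_i = \pi(z_i\mid x)$ with $p_i \ge 0$ and $\sum_{i=1}^N p_i = 1$, and let $q_i = \piref(z_i\mid x)$ (renormalized over the same finite support, or treated as the reference weights). The objective becomes $\sum_{i=1}^N p_i\, r(x,z_i) - \beta \sum_{i=1}^N p_i \log(p_i/q_i)$, a concave function of $(p_1,\dots,p_N)$ since $-\sum p_i\log(p_i/q_i)$ is concave and the reward term is linear. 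So the maximizer is unique and characterized by the first-order KKT conditions.

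Next I would carry out the Lagrangian computation: form $\mathcal{L} = \sum_i p_i r(x,z_i) - \beta\sum_i p_i\log(p_i/q_i) + \lambda(1 - \sum_i p_i)$, differentiate with respect to $p_i$, and set the derivative to zero. This gives $r(x,z_i) - \beta(\log(p_i/q_i) + 1) - \lambda = 0$, hence $\log(p_i/q_i) = \frac{1}{\beta}r(x,z_i) - 1 - \lambda/\beta$, i.e. $p_i = q_i \exp\!\big(\tfrac{1}{\beta}r(x,z_i)\big)\cdot C$ for a constant $C$ independent of $i$. Imposing the normalization constraint $\sum_i p_i = 1$ pins down $C = \big(\sum_j q_j \exp(\tfrac{1}{\beta}r(x,z_j))\big)^{-1}$, which yields exactly the claimed closed form $\pi_r(z_i\mid x) = \frac{\piref(z_i\mid x)\exp(\frac{1}{\beta}r(x,z_i))}{\sum_{j=1}^N \piref(z_j\mid x)\exp(\frac{1}{\beta}r(x,z_j))}$. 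I would then note that the nonnegativity constraints $p_i \ge 0$ are automatically satisfied (the exponential is strictly positive), so they are inactive and do not alter the stationary point; combined with concavity this confirms the stationary point is the global maximizer. Alternatively, and more cleanly, I would present the standard one-line argument: the objective equals $-\beta\, \kldiv\big(p \,\|\, \pi_r\big) + \text{const}$ after completing the logarithm, where the constant $\beta\log\sum_j q_j\exp(\tfrac1\beta r(x,z_j))$ does not depend on $p$; since KL divergence is minimized (equal to zero) precisely when $p = \pi_r$, this is the maximizer.

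I do not anticipate a serious obstacle here — this is a textbook Gibbs-variational/Donsker–Varadhan style derivation, essentially the same computation underlying DPO and RLHF, just specialized to a discrete support. The only point requiring a little care is the passage from the abstract objective over $\pi(z\mid x)$ to the finite-dimensional surrogate: one must be explicit that "approximating $\pi$ by $N$ samples" means restricting attention to distributions supported on $\{z_i\}$ and reinterpreting the KL term accordingly, so that $\piref$ enters only through its values $\piref(z_i\mid x)$. Once that reduction is made precise, the remainder is the routine Lagrange-multiplier or complete-the-KL calculation sketched above, and the uniqueness follows from strict concavity of the entropy term on the relative interior of the simplex.
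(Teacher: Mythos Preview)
Your proposal is correct and follows essentially the same approach as the paper: restrict to the finite support $\{z_i\}_{i=1}^N$, form the Lagrangian with the normalization constraint, set the partial derivatives to zero to obtain $\pi(z_i\mid x)\propto \piref(z_i\mid x)\exp(\tfrac{1}{\beta}r(x,z_i))$, and normalize. Your additional remarks on concavity, automatic nonnegativity, and the alternative complete-the-KL argument are nice touches that the paper omits, but the core derivation is the same Lagrange-multiplier computation.
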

We provide the proof in Appendix~\ref{appendix:sample_distribution_solution_proof}. Here we use the subscript notation $\pi_{r}$ to indicate that the policy is derived from the reward function $r(x, z)$. For simplicity, we omit the superscript $*$, but $\pi_{r}$ still represents the optimized policy.

%
%
\begin{algorithm}[t]
\definecolor{darkblue}{RGB}{25,25,112}
\caption{Latent Thinking Optimization}
\label{alg:latent_probabilistic_sampling}
\small
\begin{algorithmic}[1]
\Require question $x$, the original policy $\piref(z|x)$, \LRM $r(x,z)$, sampling budget $N$, the number of required samples $M$, weight $\beta>0$ to control the strength of KL-regularization
\Ensure sampled set of latent thinking trajectories $\mathcal{C}$
\State $\mathcal{C}\gets\emptyset$, $r_{\text{max}}\gets 0$\textcolor{darkblue}{\Comment{Initialize the output set and the maximum reward.}}
\For{$i = 1$ \textbf{to} $N$}
    \State $z_i \gets z \sim \piref(z|x)$\textcolor{darkblue}{\Comment{Sample the $i$-th latent thinking trajectory from the original policy.}}
    \State $r_{\text{max}}\gets \max\{r_{\text{max}}, r(z_i,x)\}$\textcolor{darkblue}{\Comment{Update the maximum reward.}}
\EndFor
\While{$|\mathcal{C}|<M$}\textcolor{darkblue}{\Comment{Repeat until $M$ samples are collected.}}
    \State $z_i \sim \text{Uniform}\{z_j\}_{j=1}^N, u_i\sim\text{Uniform}(0, 1)$
    \State $\phi_i\gets\exp((r(z_i,x)-r_{\text{max}} )/\beta)$\textcolor{darkblue}{\Comment{Calculate the acceptance probability $\phi_i$.}}
    \State\textbf{if} $u_i \geq \phi_i$ \textbf{then continue}\textcolor{darkblue}{\Comment{Reject the sample $z_i$ with probability $1-\phi_i$.}}
    \State $\mathcal{C}\gets\mathcal{C}\cup\{z_i\}$\textcolor{darkblue}{\Comment{Otherwise, accept the sample $z_i$ with probability $\phi_i$.}}
\EndWhile
\State\Return $\mathcal{C}$
\end{algorithmic}
\end{algorithm}

While Theorem~\ref{thm:sample_distribution_solution} gives a closed-form solution for our optimization problem, sampling directly from the distribution $\pi_{r}(z|x)$ is still difficult, since it is hard to accurately estimate the probability of each $\piref(z_i|x)$. To address this issue, inspired by acceptance-rejection sampling algorithms~\citep{flury1990acceptance,grover2018variational,liu2024statistical}, we propose Algorithm~\ref{alg:latent_probabilistic_sampling} to draw samples $z$ without explicitly calculating the value of $\pi_{r}(z|x)$. It draws $N$ candidate thinking trajectories $\{z_i\}^{N}_{i=1}$ from the original policy $\piref(z|x)$. Each candidate sample $z_i$ will only be accepted with probability $\phi_{i}$, where $\phi_{i}$ is designed such that the latent thinking trajectories with higher reward are more likely to be accepted. This procedure is repeated until $M$ valid samples are collected. Theoretically, the set of samples drawn in Algorithm~\ref{alg:latent_probabilistic_sampling} is guaranteed to follow the distribution $\pi_{r}(z|x)$:
\begin{theorem}\label{thm:sampling_probability}
    In Algorithm~\ref{alg:latent_probabilistic_sampling}, for each $i$, the probability of $z_i$ being drawn and accepted is $\Pr(z_i|u_i<\phi_i,x)=\pi_{r}(z_i|x)$.
\end{theorem}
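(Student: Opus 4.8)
\textbf{Proof proposal for Theorem~\ref{thm:sampling_probability}.}

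The plan is to analyze one iteration of the \texttt{while} loop and show that, conditioned on a sample being accepted in that iteration, the accepted trajectory is distributed according to $\pi_r(\cdot\mid x)$ as given in Theorem~\ref{thm:sample_distribution_solution}. Since each iteration is independent and identical, the final set $\mathcal{C}$ then consists of i.i.d. draws from $\pi_r(\cdot\mid x)$, and the per-sample claim $\Pr(z_i \mid u_i < \phi_i, x) = \pi_r(z_i\mid x)$ follows. First I would fix the candidate pool $\{z_j\}_{j=1}^N$ drawn in the first loop and condition on it throughout; note that $r_{\max} = \max_j r(z_j,x)$ is a deterministic function of this pool, so $\phi_i = \exp((r(z_i,x)-r_{\max})/\beta) \in (0,1]$ is a valid acceptance probability.

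The core computation is a standard rejection-sampling argument. In one iteration, the index is chosen uniformly, so $z_i = z_k$ with probability $1/N$ for each $k$, and then $z_k$ is accepted with probability $\phi_k = \exp((r(z_k,x)-r_{\max})/\beta)$ (the event $u_i < \phi_k$ for $u_i\sim\text{Uniform}(0,1)$). Hence
\begin{equation}
\Pr(\text{pick } z_k \text{ and accept}) = \frac{1}{N}\exp\!\left(\frac{r(z_k,x)-r_{\max}}{\beta}\right),
\end{equation}
and the total acceptance probability in one iteration is $\frac{1}{N}\sum_{j=1}^N \exp((r(z_j,x)-r_{\max})/\beta)$. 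Dividing, the $\exp(-r_{\max}/\beta)$ and $1/N$ factors cancel, giving
\begin{equation}
\Pr(z_i = z_k \mid \text{accepted}, x) = \frac{\exp\!\left(\frac{1}{\beta}r(z_k,x)\right)}{\sum_{j=1}^N \exp\!\left(\frac{1}{\beta}r(z_j,x)\right)}.
\end{equation}

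To match this with $\pi_r(z_k\mid x) = \frac{\piref(z_k\mid x)\exp(\frac{1}{\beta}r(x,z_k))}{\sum_j \piref(z_j\mid x)\exp(\frac{1}{\beta}r(x,z_j))}$ from Theorem~\ref{thm:sample_distribution_solution}, I would invoke the modeling assumption already implicit in the algorithm design: the $N$ candidates in the pool are themselves drawn from $\piref(z\mid x)$ (line 3), so within the finite pool the empirical weight of each distinct trajectory already accounts for the $\piref$ factor, i.e. uniform sampling over the realized pool $\{z_j\}_{j=1}^N$ is the finite-sample surrogate for sampling from $\piref$. Under this identification — which is exactly the ``approximate $\pi(z\mid x)$ using a finite set of $N$ sampled trajectories'' device used to state Theorem~\ref{thm:sample_distribution_solution} — the two expressions coincide, and the acceptance step reweights the $\piref$-distributed pool by $\exp(\frac{1}{\beta}r(x,z))$ exactly as required.

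The main obstacle I anticipate is the last step: making precise the sense in which ``uniform over the sampled pool'' stands in for ``$\piref$'', since the theorem as stated conflates the idealized reweighting of $\piref$ with the algorithm's empirical reweighting of a finite sample. I would handle this by stating explicitly that the claim is conditional on the pool and that $\piref(z_i\mid x)$ in the statement of Theorem~\ref{thm:sample_distribution_solution} is to be read as the probability mass induced by the sampling procedure of line 3 (so that each draw carries weight proportional to $\piref$ in expectation over pools); alternatively, one can phrase the conclusion as: the output distribution of Algorithm~\ref{alg:latent_probabilistic_sampling} equals the distribution obtained by first drawing the pool from $\piref^{\otimes N}$ and then selecting an element with probability proportional to $\exp(\frac{1}{\beta}r(x,\cdot))$, which is precisely $\pi_r$ in the finite-sample sense of Theorem~\ref{thm:sample_distribution_solution}. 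Everything else is routine: independence across loop iterations gives that $\mathcal{C}$ is an i.i.d. sample from $\pi_r$, and the termination of the loop (a.s. finite since the per-iteration acceptance probability is strictly positive) ensures the procedure is well-defined.
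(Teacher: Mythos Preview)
Your proof is correct and follows essentially the same acceptance--rejection argument as the paper: compute the joint probability of drawing $z_i$ and accepting, compute the marginal acceptance probability, and divide (Bayes' rule). The one presentational difference worth knowing is how the proposal probability is written. You work literally with the algorithm's uniform draw, obtaining $\frac{1}{N}\exp((r(z_k,x)-r_{\max})/\beta)$ and hence the pure reward-softmax, and then spend effort reconciling this with the $\piref$-weighted form of $\pi_r$. The paper instead writes the proposal probability as $\piref(z_i\mid x)$ from the outset---implicitly identifying ``uniform over a pool drawn from $\piref$'' with ``$\piref$'', exactly the finite-sample convention already used in Theorem~\ref{thm:sample_distribution_solution}---so the $\piref$ factors appear directly and the result $\pi_r(z_i\mid x)$ falls out without a separate reconciliation step. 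The paper also motivates the form of $\phi_i$ by first choosing $M=\max_j \pi_r(z_j\mid x)/\piref(z_j\mid x)$ as the tight rejection-sampling constant and then showing $\phi_i=\pi_r(z_i\mid x)/(M\,\piref(z_i\mid x))$ simplifies to $\exp((r(z_i,x)-r_{\max})/\beta)$, whereas you take $\phi_i$ as given by the algorithm. Your version is arguably more honest about what the algorithm literally does; the paper's version is shorter because it absorbs your ``main obstacle'' into the notational convention.
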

We provide the proof in Appendix~\ref{appendix:sampling_probability_proof}. This theorem shows that each accepted sample $z_i$ is drawn with probability $\pi_{r}(z_i|x)$. Since each sampling process is independent, repeating the procedure produces i.i.d. samples from exactly the same distribution $\pi_{r}(z|x)$.

We summarize the workflow of \LTO as follows: 1) Collect latent thinking trajectories from the training data to train \LRM and 2) sample multiple latent thinking trajectories and accept only high-rewarded ones that are more likely to be correct via Algorithm~\ref{alg:latent_probabilistic_sampling}. The samples drawn from Algorithm~\ref{alg:latent_probabilistic_sampling} is theoretically guaranteed to follow the distribution in Theorem~\ref{thm:sample_distribution_solution}, which is the solution to the objective of latent thinking optimization problem as defined in Equation~\ref{eqn:KL_constrained_reward_optimization}. Note that although \LTO relies on probabilistic sampling instead of parameter update to improve the latent policy, this does not diminish its nature as solving a discrete optimization problem over the latent policy.
\paragraph*{Application to General LLMs}
While our main focus is to improve the thinking process of the latent reasoning language model, the proposed \LTO algorithm can also be applied to general LLMs, such as OLMo~\citep{groeneveld2024olmo}, Llama~\citep{touvron2023llama2} and Mistral~\citep{jiang2023mistral}. 
Although general LLMs do not explicitly introduce a latent thinking process, their latent representations across multiple layers can be interpreted as latent chain of thoughts~\citep{wang2025latent}.
Under this view, \LRM and \LTO can be readily applied to general LLMs. In Appendix~\ref{appendix:additional_results_classifier}, we demonstrate that {\LRM}s trained with the latent representations of general LLMs can also achieve strong classification performance, indicating that the latent thoughts from general LLMs also encode appropriate reward signals. In Section~\ref{sec:application_to_general_LLMs}, we demonstrate that \LTO can significantly improve the performance of general LLMs on diverse reasoning tasks using these {\LRM}s.
\paragraph*{Generalist Reward Modeling}
Natural language-based process reward models are often limited to narrow domains such as math~\citep{wang-etal-2024-math,lu2024autopsv} due to their reliance on domain-specific thinking formats and structures~\citep{zeng2025versaprm}. 
By comparison, reward modeling in the latent space has the potential for better generalizability, since latent thoughts share a unified form of latent representations and may be more transferable across diverse domains. In Section~\ref{sec:generalist_reward_modeling}, we demonstrate that \LRM achieves strong transferability across diverse domains and shows potential for building a
generalist reward model in the latent space.
%
\paragraph*{High Efficiency} 
\LRM only requires a modest number of training samples (Section~\ref{appendix:implementation_details}), and \LTO is highly efficient at both the training and inference stage (Section~\ref{appendix:efficiency_analysis}). 
This highlights the potential of \LTO as an efficient alternative that performs reward modeling in the latent space, in contrast to natural language-based reward models that require substantial finetuning and inference costs~\citep{wang-etal-2024-math,lu2024autopsv,lightman2024lets}.
\paragraph*{Guaranteed Performance Improvement}
While \LTO does not explicitly modify the latent policy, we theoretically demonstrate in Appendix~\ref{appendix:theoretical_analysis_correctness_rate} that improving the accuracy of the \LRM directly translates into a higher expected correctness rate. Thus, \LTO enables latent thinking improvement simply by scaling and refining the \LRM (e.g., with more training data) which is computationally lightweight, rather than costly finetuning the base model to improve its latent policy.

\section{Experimental Settings}\label{sec:experimental_settings}
\paragraph*{Datasets}
To study whether our approach can improve the thinking processes of \LRLM across diverse tasks with different thinking patterns, we evaluate its performance on five datasets from three domains: (1) \textbf{GSM8K}~\citep{cobbe2021training}, \textbf{SVAMP}~\citep{patel2021nlp}, \textbf{GSM-Symbolic}~\citep{mirzadeh2025gsmsymbolic} for the \textit{Math} domain, (2) \textbf{CommonsenseQA}~\citep{talmor2019commonsenseqa} for the \textit{Commonsense Reasoning} domain; and (3) \textbf{MBPP}~\citep{austin2021program} for the \textit{Code Generation} domain. The details of the datasets are in Appendix~\ref{appendix:dataset_details}.
%
\paragraph*{Baselines and Implementation Details} 
Since \LRLM generates the thinking process in the form of latent representations, many thinking correction methods with a trained process verifier in the natural language space~\citep{lu2024autopsv,wang-etal-2024-math} may not be applicable to the latent space. 
Therefore, we compare our approach against two types of reasoning correction and improvement methods applicable to \LRLM: 
(1) \textit{Answer Correction}. These methods correct and improve the answers without requiring access to the thinking process. We include three representative approaches: Majority Voting~\citep{wang2023selfconsistency}, Self-Correction with Confidence Score~\citep{ren2023self}, Self-Correction with Verbal Evaluation~\citep{manakul2023selfcheckgpt}.
(2) \textit{Latent Thinking Correction}. While explicit correction of latent thoughts remains underexplored, a recent work~\citep{wang2025latent} introduces two heuristic metrics (CoE-R and CoE-C) to evaluate the correctness score of latent thoughts. We adopt these scores as the correction signals, yielding two additional baselines: Latent Thinking Correction with CoE-R, and Latent Thinking Correction with CoE-C.
Furthermore, we evaluate a simplified version of our approach, Weighted Majority Voting with \LRM, which use the \LRM reward as a weighting signal. 
We also report the performance of the base model (directly generating a latent thinking trajectory and the corresponding answer without any correction) to quantify the performance improvement achieved by our approach and competing baselines.
Implementation details of baselines and our method are in Section~\ref{appendix:implementation_details}.
\section{Experimental Results}
\subsection{Overall Performance Comparison}
\begin{table*}[t]
\centering
\newcommand{\supscriptspace}{\makebox[\widthof{$^{*}$}]{}} 
\small
\setlength{\tabcolsep}{2pt}
\caption{Comparison of the answer correctness rate of \LRLM using different correction methods. The best performance in each column is in \textbf{bold}, and the performance of the best baseline in each column is \underline{underlined}. $*$ indicates statistically significant improvement with $p<0.05$.}
\begin{tabular}{lccccccccc}
\toprule
                       \textbf{Method}& 
                       \centering\textbf{GSM8K}&\textbf{GSM-Symbolic}&\textbf{SVAMP} & \textbf{CommonsenseQA}  & \textbf{MBPP}
\\ \midrule Base Model   &0.326
&0.265&0.517 &0.500&0.278\\ 
Majority Voting &0.333&0.269&0.511&0.504&\underline{0.288}                \\
Self-Correction w. Confidence Score&\underline{0.342}&\underline{0.281}&\underline{0.524}&\underline{0.507}&\underline{0.288}\\
Self-Correction w. Verbal Evaluation&0.262&0.193&0.518&0.505&0.226\\
Latent Thinking Correction w. CoE-R&0.330&0.259&0.510&0.504&0.276\\
Latent Thinking Correction w. CoE-C&0.324&0.256&0.516&\underline{0.507}&0.280\\
\midrule
\rowcolor{gray!15}Weighted Majority Voting w. \LRM   &\supscriptspace0.375$^{*}$&\supscriptspace0.301$^{*}$&\supscriptspace0.537$^{*}$&0.509&\supscriptspace0.295$^{*}$\\
\rowcolor{gray!15}\LatentThinkingOptimization w. \LRM&\supscriptspace\textbf{0.385}$^{*}$&\supscriptspace\textbf{0.305}$^{*}$&\supscriptspace\textbf{0.538}$^{*}$&\supscriptspace\textbf{0.517}$^{*}$&\supscriptspace\textbf{0.299}$^{*}$\\
\bottomrule
\end{tabular}
\label{tab:overall_performance_comparison}
\end{table*}

Table~\ref{tab:overall_performance_comparison} presents the experimental results on all the datasets. We have the following observations:
%
\paragraph*{\LTO significantly improves the latent thinking processes.}
%
Across all datasets, \LTO consistently outperforms both the base model and the best baseline for thinking correction. 
Leveraging a well-trained \LRM, \LTO can effectively detect and correct erroneous thinking patterns in the latent space via a probabilistic algorithm, bringing robust and consistent improvements to the latent thinking processes. 
By comparison, other thinking correction methods show suboptimal performance or even worse performance than the base model, indicating that these techniques originally developed for verbal thinking are not suitable for identifying errors for latent thinking.
%
\paragraph*{\LRM is highly effective in detecting incorrect latent thinking patterns.} 
%
Both weighted majority voting and \LatentThinkingOptimization with \LRM achieve consistent improvements over the base model. 
Notably, standard majority voting yields little to no benefit; however, when the \LRM reward is used as a weighting signal, weighted majority voting achieves substantial gains. 
This demonstrates that the \LRM reward provides a reliable estimation of the correctness of latent thoughts and serves as an effective correction signal for thinking correction algorithms in the latent space.
\subsection{Application to General LLMs}\label{sec:application_to_general_LLMs}
\begin{table*}[t]
\centering
\newcommand{\supscriptspace}{\makebox[\widthof{$^{*}$}]{}} 
\small
\setlength{\tabcolsep}{1.5pt}
\caption{Performance of \LTO on general LLMs. The best-performing method for each model is in \textbf{bold}. $*$ indicates the improvement over the best runner-up is statistically significant with $p<0.05$.}
\begin{tabular}{llccccccccc}
\toprule
                       \textbf{Model}& \textbf{Method}& 
                       \centering\textbf{GSM8K}&\textbf{GSM-Symbolic}&\textbf{SVAMP} & \textbf{CommonsenseQA}  & \textbf{MBPP}
\\ \midrule \multirow{3}{*}{OLMo-7B} &Base Model  &0.124
&0.078&0.297 &0.464&0.244\\ 
&Majority Voting &0.209&0.149&0.469&0.521&0.240                \\
&\LatentThinkingOptimization&\supscriptspace\textbf{0.252}$^{*}$&\supscriptspace\textbf{0.154}$^{*}$&\supscriptspace\textbf{0.552}$^{*}$&\supscriptspace\textbf{0.602}$^{*}$&\supscriptspace\textbf{0.308}$^{*}$\\
\midrule \multirow{3}{*}{Llama-2-7B} &Base Model   &0.223 
&0.204&0.473 &0.399&0.189\\ 
&Majority Voting &0.275&0.302&0.598&0.493&0.193 \\             &\LatentThinkingOptimization&\supscriptspace\textbf{0.389}$^{*}$&\supscriptspace\textbf{0.316}$^{*}$&\supscriptspace\textbf{0.776}$^{*}$&\supscriptspace\textbf{0.606}$^{*}$&\supscriptspace\textbf{0.237}$^{*}$ \\

\midrule \multirow{3}{*}{Llama-2-13B}
&Base Model  &0.306
&0.273&0.521 &0.398&0.247\\ 
&Majority Voting &0.417&0.379&0.612&0.501&0.263                \\
&\LatentThinkingOptimization&\supscriptspace\textbf{0.534}$^{*}$&\supscriptspace\textbf{0.442}$^{*}$&\supscriptspace\textbf{0.791}$^{*}$&\supscriptspace\textbf{0.650}$^{*}$&\supscriptspace\textbf{0.322}$^{*}$\\
\midrule
\multirow{3}{*}{Mistral-7B} &Base Model   &0.368  
&0.278&0.548 &0.671&0.315\\ 
&Majority Voting &0.529&0.413&0.624&0.687&0.334               \\
&\LatentThinkingOptimization&\supscriptspace\textbf{0.565}$^{*}$&\supscriptspace\textbf{0.462}$^{*}$&\supscriptspace\textbf{0.771}$^{*}$&\supscriptspace\textbf{0.708}$^{*}$&\supscriptspace\textbf{0.388}$^{*}$\\
\bottomrule
\end{tabular}
\label{tab:performance_general_llms}
\end{table*}

While we mainly focus on improving the thinking process of \LRLM, we also demonstrate that \LTO can be applied to general LLMs, such as OLMo~\citep{groeneveld2024olmo}, Llama~\citep{touvron2023llama2} and Mistral~\citep{jiang2023mistral}. 
%
%
To evaluate the performance of \LTO on general LLMs, we use the same \LRM and \LTO configurations as described in Section~\ref{sec:experimental_settings}, and train {\LRM}s using the latent representations from general LLMs. 
From the experimental results in Table~\ref{tab:performance_general_llms}, we can see that \LTO achieves substantial performance gains across diverse datasets, with an improvement of up to 103\% over the base model, even with a modest sampling budget ($N=20$). 
These results highlight the potential of \LTO as a general method for improving the latent thinking processes of LLMs.
\subsection{Generalist Reward Modeling}\label{sec:generalist_reward_modeling}
\begin{wrapfigure}{r}{0.3\textwidth}
\vspace{-1.2cm}
\centering
\includegraphics[width=\linewidth]{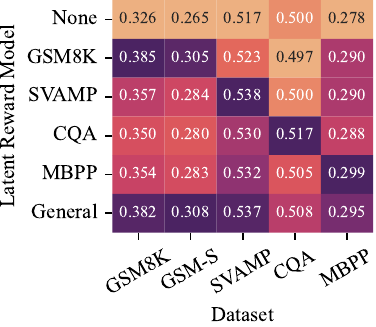}
    \caption{Performance of \LTO using different {\LRM}s. ``GSM-S'' refers to the GSM-Symbolic dataset. ``CQA'' refers to the CommonsenseQA dataset. 
    ``None'' refers to the performance of the base model without \LTO.}
    \vspace{-0.5cm}
    \label{fig:performance_on_different_datasets}
\end{wrapfigure}
%
%
To evaluate whether {\LRM}s trained on one dataset can be applied to another dataset, we first examine the cross-dataset transferability of {\LRM}s by evaluating the performance of \LTO when paired with an \LRM trained on different datasets. We extend the study by training a general \LRM on the combined training data from all datasets and evaluating the performance of \LTO with the general \LRM. From the results in Figure~\ref{fig:performance_on_different_datasets}, we can see that {\LRM}s demonstrate transferability across different domains, since \LTO can improve the performance of the base model when paired with an \LRM trained on a different dataset. The improvement is consistent even if the gap between domains is large. 
For example, although CommonsenseQA primarily involves commonsense reasoning
, an \LRM trained on CommonsenseQA still improves performance on math-focused datasets such as GSM8K, GSM-Symbolic, and SVAMP. This suggests that LRMs may capture some fundamental aspects of latent thinking patterns shareable across different domains. 
Furthermore, the performance of \LTO using the general \LRM is on par with the performance of \LTO using domain-specific LRMs. 
These results suggest that latent reward modeling can generalize across domains. 
While our empirical results have not yet achieved full transferability across all possible tasks, we believe that they indicate promising cross-domain potential for building a generalist reward model in the latent space for future work.
\vspace{-0.2cm}
\section{Conclusion}
\vspace{-0.2cm}
In this paper, 
we observe that the latent thoughts of \LRLM that lead to correct versus incorrect answers display distinct thinking patterns, and such difference is highly distinguishable by a latent classifier.
Building on these insights, we formulate latent thinking improvement as a reward optimization problem over latent policies, and propose an \LTO algorithm that uses the latent classifier as an \LRM to optimize the latent thinking processes. Extensive experiments across diverse reasoning tasks demonstrate \LTO can significantly improve the latent thinking processes of \LRLM.
Furthermore, we show \LRM can generalize across different domains, and \LTO can be seamlessly applied to general LLMs to improve their thinking processes.
In contrast to verbal thinking approaches that scale test-time compute through natural language generation~\citep{guo2025deepseek,muennighoff2025s1}, our method demonstrates that reward modeling and scaling test-time thinking with supervision can be performed directly in the latent space, offering a general (Section~\ref{sec:application_to_general_LLMs}), efficient (Appendix~\ref{appendix:efficiency_analysis}), and domain-agnostic (Section~\ref{sec:generalist_reward_modeling}) approach to improving the thinking processes of LLMs. We discuss the related works, limitations, broader impact and reproducibility of our research in Appendix~\ref{appendix:related_works}, Appendix~\ref{appendix:limitation_statement}, Appendix~\ref{appendix:impact_statement} and Appendix~\ref{appendix:reproducibility_statement}, respectively.
\newpage
\bibliography{iclr2026_conference}
\bibliographystyle{iclr2026_conference}
\newpage
\appendix
\setcounter{table}{0}
\setcounter{figure}{0}
\setcounter{theorem}{0}
\setcounter{definition}{0}
\renewcommand{\thetable}{A\arabic{table}}
\renewcommand{\thefigure}{A\arabic{figure}}
\part*{Appendices}
\begin{figure*}
    \centering
\begin{subfigure}[t]{0.49\textwidth}
    \centering
    \includegraphics[width=\textwidth]{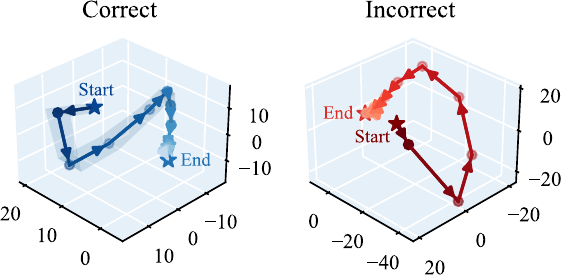}
    \caption{Problem \#171 from SVAMP.}
    \label{fig:correct_incorrect_trajectory_svamp_171}
    \end{subfigure}
        \begin{subfigure}[t]{0.49\textwidth}
    \centering
    \includegraphics[width=\textwidth]{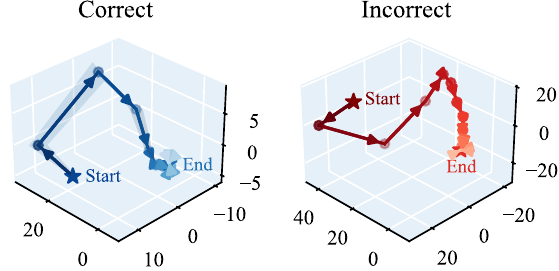}
    \caption{Problem \#70 from MBPP.}
    \label{fig:correct_incorrect_trajectory_mbpp_70}
    \end{subfigure} 
        \begin{subfigure}[t]{0.49\textwidth}
    \centering
    \includegraphics[width=\textwidth]{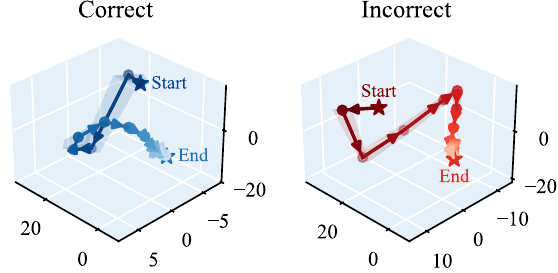}
    \caption{Problem \#257 from SVAMP.}
    \label{fig:correct_incorrect_trajectory_svamp_257}
    \end{subfigure} 
        \begin{subfigure}[t]{0.49\textwidth}
    \centering
    \includegraphics[width=\textwidth]{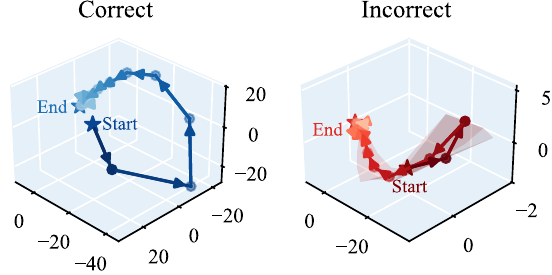}
    \caption{Problem \#102 from MBPP.}
        \label{fig:correct_incorrect_trajectory_mbpp_102}
    \end{subfigure}
    
    \begin{subfigure}[t]{0.49\textwidth}
    \centering
    \includegraphics[width=\textwidth]{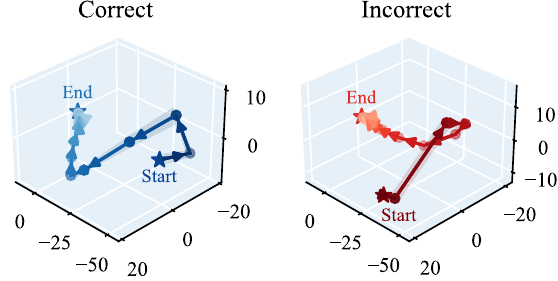}
    \caption{Problem \#276 from SVAMP.}
    \label{fig:correct_incorrect_trajectory_svamp_276}
    \end{subfigure}
        \begin{subfigure}[t]{0.49\textwidth}
    \centering
    \includegraphics[width=\textwidth]{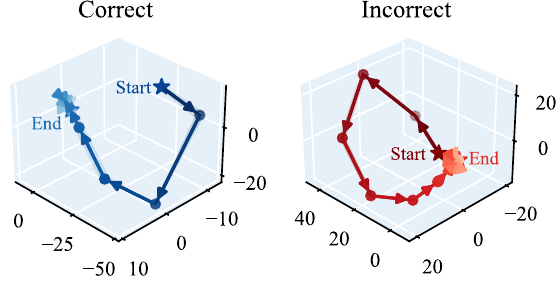}
    \caption{Problem \#159 from MBPP.}
    \label{fig:correct_incorrect_trajectory_mbpp_159}
    \end{subfigure} 
        \begin{subfigure}[t]{0.49\textwidth}
    \centering
    \includegraphics[width=\textwidth]{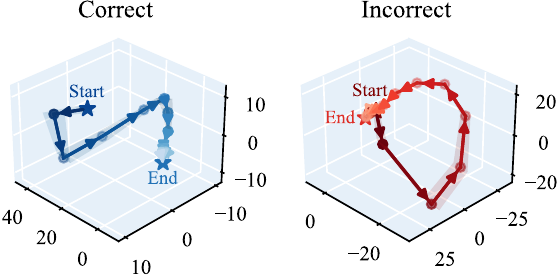}
    \caption{Problem \#365 from SVAMP.}
    \label{fig:correct_incorrect_trajectory_svamp_365}
    \end{subfigure} 
        \begin{subfigure}[t]{0.49\textwidth}
    \centering
    \includegraphics[width=\textwidth]{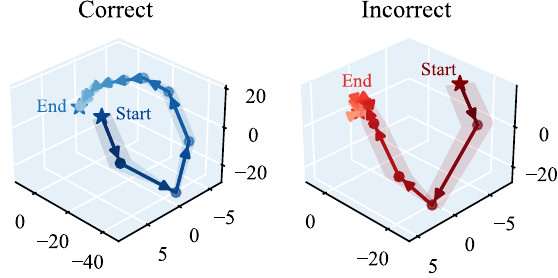}
    \caption{Problem \#191 from MBPP.}
    \label{fig:correct_incorrect_trajectory_mbpp_191}
    \end{subfigure} 
        \begin{subfigure}[t]{0.49\textwidth}
    \centering
    \includegraphics[width=\textwidth]{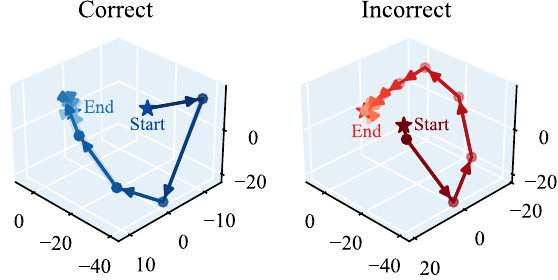}
    \caption{Problem \#624 from SVAMP.}
    \label{fig:correct_incorrect_trajectory_svamp_624}
    \end{subfigure} 
        \begin{subfigure}[t]{0.49\textwidth}
    \centering
    \includegraphics[width=\textwidth]{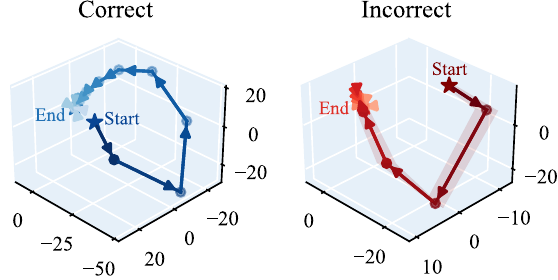}
    \caption{Problem \#215 from MBPP.}
    \label{fig:correct_incorrect_trajectory_mbpp_215}
    \end{subfigure} 
    \caption{Visualization of the distribution of the \textcolor[HTML]{5E88B7}{correct} and \textcolor[HTML]{C84D50}{incorrect} latent thoughts projected onto 3D space demonstrate that correct and incorrect latent thoughts exhibit different patterns in the latent space. Note that this phenomenon is not limited to these cases. On the SVAMP dataset, we identify 1,654 problems with both correct and incorrect answers, and on the MBPP dataset, we identify 179 problems with both correct and incorrect answers. In all of these cases, the latent thoughts leading to correct versus incorrect answers show different patterns in the latent space.}
    \label{fig:correct_incorrect_trajectory_additional_examples}
\end{figure*} 
\section{Related Works}\label{appendix:related_works}
\subsection{Verbal and Latent Thinking for LLMs}
Human cognition often involves thinking through intermediate steps rather than directly answering the question~\citep{kahneman2011thinking,zelikman2024quietstar}. 
Inspired by this, a growing line of research focuses on guiding LLMs to generate intermediate reasoning steps as the thinking process before generating the answers. 
Most approaches represent the thinking process in natural language, such as step-by-step chain-of-thought prompting~\citep{wei2022chain,kojima2022large,wang2023selfconsistency}, self-correction via iterative feedback~\citep{shinn2023reflexion,madaan2023self,kumar2025training}, or building reasoning trees to explore diverse solutions~\citep{yao2023tree,hao2023reasoning}. 
While effective, such verbal thinking incurs significant computational cost, and is also prone to the overthinking issue~\citep{chen2025do,sui2025stop}. 
In contrast, latent thinking offers an alternative approach, where the model represents its thinking process as compact latent representations rather than natural language. This approach is more computationally efficient and better suited for reasoning with abstract concepts that are difficult to verbalize. Among various approaches for latent thinking~\citep{zhang2023planner,goyal2024think,hao2025training,geiping2025scaling}, a representative one is the latent reasoning language model~\citep{geiping2025scaling}, which is pretrained from scratch as a new language model architecture. It introduces a recurrent unit to generate sequences of latent thoughts and supports test-time scaling with flexible computation budgets. 
Despite promising, the lack of interpretability in the latent representations makes it difficult to understand what the model is actually thinking about or to verify the correctness of its thinking process. 
\emph{In this paper, we aim to bridge this gap by investigating how the latent reasoning language model thinks in the latent space and how external supervision can guide and improve the latent thinking processes.}
\subsection{Scaling up Test-Time Compute}
As LLMs are tasked with increasingly difficult problems, directly prompting the LLM to generate the answers is often insufficient. 
To address this, recent works emphasize scaling up test-time compute as an effective approach to enhance the problem-solving capability of LLMs~\citep{sardana2024beyond,snell2025scaling}.
Existing approaches scale up test time compute from different perspectives, such as sequential scaling with revisions to refine the answer~\citep{shinn2023reflexion,madaan2023self,muennighoff2025s1}, parallel scaling by generating multiple answers to search for diverse solutions~\citep{wang2023selfconsistency,yao2023tree,hao2023reasoning}, or scaling with a verifier or reward model to ensure the correctness of solutions~\citep{wang-etal-2024-math,lu2024autopsv,feng2025stepbystep,setlur2025scaling}. 
However, most of these approaches focus on scaling up test-time compute using natural language, and how to scale up test-time compute in the latent space~\citep{geiping2025scaling} remains underexplored. \emph{In this paper, we introduce a probabilistic sampling approach with a latent reward model that can improve the latent thinking processes and enable efficient and effective test-time scaling in the latent space.}
\section{Additional Visualization of Latent Thoughts}\label{appendix:examples_latent_thoughts}
Additional examples on the visualization of correct and incorrect latent thoughts are in Figure~\ref{fig:correct_incorrect_trajectory_additional_examples}.
\section{Calculation of Representation quality Metrics}\label{appendix:representation_quality_metrics}
In this section, we provide the details on how to calculate the representation quality metrics. For a question $x$, \LRLM generates $T$ steps of latent thoughts $\mathbf{h}_{1:T}$ recursively. Each latent thought $\mathbf{h}_t\in\mathbb{R}^{L\times d}$ is an internal hidden state generated by \LRLM, where $L$ is the number of tokens, $d$ is the hidden dimensionality. The representation quality metrics are calculated over the latent thoughts across all the thinking steps to capture the evolving dynamics of latent thinking processes. Note that in the Huginn-3.5B architecture, the same RMSNorm module with the same rescaling weight $w$ is applied to each step of latent thought. Therefore, all the latent thoughts from different steps are already normalized to the same scale before we calculate the representation quality metrics, making these latent representations scale-invariant. For each latent thought $\mathbf{h}_t (1{\leq}t{\leq}T)$, we calculate the Entropy, Effective Rank, Anisotropy and Intrinsic Dimension of $\mathbf{h}_t$ as follows.
\subsection{Entropy}
Entropy~\citep{skean2025layer} quantifies how much information content the latent representations carry. A higher entropy indicates a richer spread of information across many dimensions, reflecting diverse, less redundant features and better information preservation. Conversely, a lower entropy reflects concentrated eigenvalue spectra, suggesting that the latent representations may contain redundant information. We compute the entropy over the Gram matrix $\mathbf{K} = \mathbf{h}_t \mathbf{h}_t^\top$ using the matrix-based R\'enyi entropy. For any $\alpha > 0$, this is defined as:

\begin{equation}
\label{eq:matrix-based-entropy}
    \operatorname{Entropy}(\mathbf{h}_t) \;=\; \frac{1}{1-\alpha} \,\log \left(\,\sum_{i=1}^{r}\left(\tfrac{\lambda_i(\mathbf{K})}{\mathrm{tr}(\mathbf{K})}\right)^\alpha\right),
\end{equation}
where $\lambda_i(\mathbf{K})$ denotes the $i$-th eigenvalue of the Gram matrix $\mathbf{K}$, $r=\text{rank}(\mathbf{K})$ denotes its rank. While we can vary $\alpha$ to get different formulations of matrix entropy, we follow the approach of~\citet{skean2025layer} and choose $\alpha{\rightarrow}1$, which is equivalent to the standard von Neumann entropy.

\subsection{Effective Rank}
Effective Rank~\citep{wei2024DiffeRank} measures how effectively the model extracts key concepts and reduces noisy features in its latent representations. A higher effective rank implies that the representations contain noisy features, while a lower effective rank indicates better noise reduction. It is defined as follows:
\begin{equation} 
\operatorname{EffectiveRank}(\mathbf{h}_t) = \exp{\left( - \sum^K_{i=1} \frac{\sigma_i}{\sum^K_{i=1} \sigma_i}  \log \frac{\sigma_i}{\sum^K_{i=1} \sigma_i} \right)},
\end{equation}
where $K = \min \{ L, d \}$, and $\sigma_1,\sigma_2, \dots,  \sigma_{K}$ are the singular values of the matrix $\mathbf{h}_t$.

\subsection{Anisotropy}
Anisotropy~\citep{razzhigaev2024shape} measures the non-uniformity of a distribution in the latent space. A higher anisotropy
suggests that representations are more directed
in specific orientations, while a lower anisotropy indicates that the representations are spread out more evenly in all directions. It is defined as follows:
\begin{equation}
    \operatorname{Anisotropy}(\mathbf{h}_t) = \frac{\sigma_1^2}{\sum_{i=1}^{K} \sigma_i^2}.
\end{equation}
where $K = \min \{ L, d \}$, and $\sigma_1,\sigma_2, \dots,  \sigma_{K}$ are the singular values of the matrix $\mathbf{h}_t$.
\subsection{Intrinsic Dimension}
Intrinsic Dimension~\citep{facco2017estimating,cheng2025emergence} quantifies the minimal number of coordinates required to describe the local geometric structure of the representations without significant information loss. 
A higher intrinsic dimension indicates a rich, complex latent structure, while a lower intrinsic dimension suggests the representation lies on a simpler manifold.
Specifically, for the matrix $\mathbf{h}_{t}\in\mathbb{R}^{L{\times}d}$, we can view it as a collection of $L$ points $\mathbf{h}^{i}_{t}$ in the $d$-dimensional space, i.e., $\mathbf{h}_t
= \{\mathbf{h}^{i}_{t}\}_{i=1}^L$. To calculate the intrinsic dimension, we use the Two-Nearest-Neighbour estimator~\citep{facco2017estimating}: 
for each point $\mathbf{h}^i_t$, we compute its nearest-neighbor distance $r_{1,i}$ and second-nearest-neighbor distance $r_{2,i}$, and form the ratio
$\mu_i = r_{2,i}/r_{1,i}$. Sorting $\{\mu_i\}_{i=1}^L$ in ascending order yields $\mu_{(1)},\dots,\mu_{(L)}$, and the empirical cumulative distribution is given by $F_j = j/L$. Each $\mu_{(j)}$ is then mapped to a transformed data point
$
(x_j = \log \mu_{(j)}, y_j = -\log(1 - F_j))$. Under mild assumptions, the points $\{(x_j, y_j)\}^{L}_{j=1}$ are theoretically expected to align on a straight line through the origin, and the slope of this line provides an estimation of the intrinsic dimension. 
Following~\citet{dadapy}, we use the standard Euclidean distance as the distance metric, and introduce a trimming factor $f=0.1$ to discard extremely large values of $\mu_i = r_{2,i} / r_{1,i}$, ensuring robustness against outlier data points that may violate the estimator’s assumptions.
The detailed algorithm is summarized in Algorithm~\ref{alg:intrinsic_dimension}.
\begin{algorithm}
\caption{Calculation of Intrinsic Dimension with Two-Nearest-Neighbour Estimation}
\small
\begin{algorithmic}[1]
\Require matrix $\mathbf{h}_{t} = \{\mathbf{h}^{i}_{t}\}_{i=1}^L$, distance metric $\mathrm{dist}(\cdot, \cdot)$, trimming fraction $f \in [0,1)$ 
\Ensure estimated intrinsic dimension $\hat d$

\For{$i = 1$ \textbf{to} $L$}
  \State Compute the pairwise distances $\{\mathrm{dist}(\mathbf{h}^{i}_{t}, \mathbf{h}^{j}_{t})\}_{j \ne i}$
  \State $r_{1,i} \gets$ smallest distance (nearest neighbor)
  \State $r_{2,i} \gets$ second smallest distance (second nearest neighbor)
  \State $\mu_i \gets r_{2,i} / r_{1,i}$
\EndFor

\State Sort $\{\mu_i\}^{L}_{i=1}$ in ascending order to obtain $\mu_{(1)}, \ldots, \mu_{(L)}$
\For{$j = 1$ \textbf{to} $L$}
  \State $F_j \gets j / L$
  \State $x_j \gets \log(\mu_{(j)})$
  \State $y_j \gets -\log\bigl(1 - F_j\bigr)$
\EndFor

\If{$f > 0$}
  \State Trim the largest $\lceil f \cdot L \rceil$ values of $\mu_{(j)}$ by setting $L^{'} \gets \lfloor (1 - f)\,L \rfloor$
\Else
  \State Keep all the $\mu_{(j)}$ by setting $L^{'}\gets L$
\EndIf

\State Fit the points of the plane given by coordinates $\{(x_j,y_j)\}^{{L}^{'}}_{j=1}$ with a straight
line $y = \hat d \cdot x$ passing through the origin
\State \Return the slope $\hat d$ as the estimated intrinsic dimension
\end{algorithmic}
\label{alg:intrinsic_dimension}
\end{algorithm}

\section{Interpretability Analysis of Latent Thoughts}
To better understand the observed patterns in the correct and incorrect latent thoughts, we provide an interpretability analysis by decoding the latent thoughts at different reasoning steps and examining how they evolve toward (or away from) the correct answer. To make this analysis clear, we use a one-digit arithmetic dataset, where the model must output a single digit answer to an arithmetic question. This setting is ideal for interpretability because the operations are simple and easy to verify, and we can directly inspect how the decoded latent thoughts evolve at the specific token position corresponding to the answer digit. Using the coda module (decoder) from the latent reasoning language model, we decode the latent thought at each thinking step into its top-5 most probable tokens and analyze the progression of latent thoughts and show representative examples with correct and incorrect thinking patterns in Figure~\ref{fig:interpretability_analysis}.

For correct examples, we observe that the correct digit token emerges among the top-k candidates at middle thinking steps, then rises to rank-1 and stabilizes in later steps. In contrast, for incorrect examples, the correct token does not consistently rise in rank, the latent thoughts show fluctuation or drifting behavior and the final step fails to converge to the correct answer. The pattern differences between correct and incorrect latent thoughts demonstrate that the latent reasoning language model encodes meaningful reasoning patterns in its latent thoughts that reflect its thinking processes.
\begin{figure*}
\begin{AIBox}{}
\parbox[t]{\textwidth}{
\small\begin{alltt}
\textbf{Example 1 with Correct Thinking Patterns (Answer: 7) }\\
\\
Question: What is (1 * 1) + 6? Answer:\\
Top-5 ranked tokens decoded from latent thoughts at different thinking steps:\\
Step 16: [-, 1, 3, 2, 4]\\
Step 32: [1, 7, 6, 2, 8]\\
Step 48: [7, 6, 1, 8, 2]\\
Step 64: [7, 6, 1, 8, 2]
\tcbline
\textbf{Example 2 with Correct Thinking Patterns (Answer: 5)}\\
\\
Question: What is (7 + 2) - 4? Answer:\\
Top-5 ranked tokens decoded from latent thoughts at different thinking steps:\\
Step 16: [-, 1, 3, 2, 4]\\
Step 32: [1, 5, 9, 2, 6]\\
Step 48: [5, 1, -, 7, 4]\\
Step 64: [5, 7, 9, 1, -]
\tcbline
\textbf{Example 3 with Correct Thinking Patterns (Answer: 7)}\\
\\
Question: What is (3 * 2) + 1? Answer:\\
Step 16: [-, 1, 3, 2, 4]\\
Step 32: [6, 1, 5, 7, 4]\\
Step 48: [6, 1, 7, 5, 9]\\
Step 64: [7, 1, 6, 5, 9]
\tcbline
\textbf{Example 4 with Incorrect Thinking Patterns (Answer: 8)}\\
\\
Question: What is (8 - 2) + 2? Answer:\\
Top-5 ranked tokens decoded from latent thoughts at different thinking steps:\\
Step 16: [-, 1, 2, 3, 4]\\
Step 32: [6, 4, 1, -, 5]\\
Step 48: [6, 4, -, 5, 8]\\
Step 64: [6, 4, 8, -, 1]
\tcbline
\textbf{Example 5 with Incorrect Thinking Patterns (Answer: 7):}\\
\\
Question: What is (3 - 2) + 6? Answer:\\
Top-5 ranked tokens decoded from latent thoughts at different thinking steps:\\
Step 16: [-, 1, 3, 2, 4]\\
Step 32: [1, -, 5, 2, 3]\\
Step 48: [3, 4, 5, 2, 1]\\
Step 64: [4, 5, 6, 3, 2]
\tcbline
\textbf{Example 6 with Incorrect Thinking Patterns (Answer: 7):}\\
\\
Question: What is (6 - 4) + 5? Answer:\\
Top-5 ranked tokens decoded from latent thoughts at different thinking steps:\\
Step 16: [-, 1, 3, 2, 4]\\
Step 32: [1, -, 2, 4, 3]\\
Step 48: [2, 1, 3, 4, 6]\\
Step 64: [1, 2, 3, 5, 4]
\end{alltt}}

\end{AIBox}
\caption{Tokens decoded from correct and incorrect latent thoughts at different thinking steps.}
\label{fig:interpretability_analysis}
\end{figure*}
\section{Training Details of the Latent Classifier}\label{appendix:training_details_latent_classifier}
To capture the thinking dynamics of the latent thoughts across different thinking steps, we design a latent classifier that can operate over the sequence of latent representations.
Specifically, we adopt a 2-layer Transformer~\citep{vaswani2017attention} with Sinusoidal positional encoding to encode the sequence of latent thoughts. The configuration of the latent classifier (hidden dimensionality 5280, number of attention heads 55, and MLP hidden size 17920) follows the configuration of \LRLM. 
While we observe in our experiments that alternative configurations also bring comparable performance, we use this configuration as the default setting. 
The output sequences of the Transformer are aggregated with mean pooling over the dimension $T$ (number of thinking steps), followed by a two-layer MLP with ReLU as activation function to produce logits for binary classification. Training is performed with binary cross-entropy loss for 10 epochs using the Adam optimizer~\citep{Diederik2015adam} with a learning rate of $5e-6$.

However, a challenge is that each latent thought $\mathbf{h}_t \in \mathbb{R}^{L \times d}$ is a matrix rather than a vector, and this requires an aggregation over the dimension $L$ before it can be processed by the Transformer. 
To address this challenge, we experiment with different aggregation strategies in Table~\ref{tab:classifier_performance_different_aggregration_strategies}, and empirically we observe that apply mean pooling over the hidden states corresponding to all the $L$ tokens yields better performance than mean pooling over the hidden states corresponding to the first 10 or the last 10 tokens. Therefore, we choose to apply mean pooling over the dimension $L$ for each latent thought $\mathbf{h}_t$. This design choice is also motivated by the common practices in probing methods, where mean pooling over the sequence dimension is widely adopted as a standard approach for deriving fixed-length representations from variable-length sequences~\citep{hewitt2019structural,tenney2019bert,ren2023outofdistribution}. 
\begin{table*}
\centering
\small
\setlength{\tabcolsep}{2pt}
\caption{Performance comparison of the latent classifier with different aggregation strategies. The best performance in each column is in \textbf{bold}.
}
{\begin{tabular}{lccccccccccc}
\toprule
\multirow{2.5}{*}{\makecell{\textbf{Aggregation}\\\textbf{Strategy}}} & \multicolumn{2}{c}{\textbf{GSM8K}} && \multicolumn{2}{c}{\textbf{SVAMP}}&& \multicolumn{2}{c}{\textbf{CommonsenseQA}}&&\multicolumn{2}{c}{\textbf{MBPP}}\\\cmidrule{2-3}\cmidrule{5-6}\cmidrule{8-9}\cmidrule{11-12}
                       & Accuracy& ROC-AUC   && Accuracy& ROC-AUC   &
                       & Accuracy& ROC-AUC   &
                       & Accuracy& ROC-AUC   \\ \midrule

first 10 tokens      &  0.708&0.742&&0.924&0.980&&0.574&0.595&&0.732&0.742\\ 
last 10 tokens  &0.790&0.863&&0.957&\textbf{0.987}&&0.610&0.644&&0.749&0.773  \\
all the tokens&\textbf{0.820}&\textbf{0.884}&&\textbf{0.960}&\textbf{0.987}&&\textbf{0.623}&\textbf{0.671}&&\textbf{0.790}&\textbf{0.807}\\
\bottomrule
\end{tabular}}
\label{tab:classifier_performance_different_aggregration_strategies}
\end{table*}
\section{Additional Theoretical Results}\label{appendix:thm_results}
\subsection{Proof for Theorem~\ref{thm:sample_distribution_solution}}\label{appendix:sample_distribution_solution_proof}
\begin{theorem}\label{thm:sample_distribution_solution_appendix}
    Given a sampled set of $\{z_i\}^{N}_{i=1}$ to approximate the policy distribution $\pi^{*}(z|x)$, for each $z_i$, the solution to Equation~\ref{eqn:KL_constrained_reward_optimization} is 
    $\pi_{r}(z_i|x)=\frac{\piref(z_i\mid x)\exp\left(\frac{1}{\beta}r(x, z_i)\right)}{\sum^{N}_{j=1}\piref(z_j\mid x)\exp\left(\frac{1}{\beta}r(x, z_j)\right)}$.
\end{theorem}
\begin{proof}
Since we are sampling from a discrete set of $\{z_i\}^{N}_{i=1}$, we represent the policy distribution $\pi(z|x)$ as a vector over the set of latent thoughts $\{z_i\}^{N}_{i=1}$. To ensure that $\pi(z|x)$ forms a valid policy distribution, $\pi(z|x)$ should satisfy the constraint $\sum^{N}_{i=1}\pi(z_i|x)=1$. To solve the optimization problem from Equation~\ref{eqn:KL_constrained_reward_optimization} subject to this constraint, we introduce a Lagrange multiplier $\lambda$ and construct the Lagrangian:
\begin{equation*}
\mathcal{L}(\pi(z|x),\lambda)=\sum\nolimits^{N}_{i=1}\left[\pi(z_i|x)r(x,z_i)-\beta\pi(z_i|x)\log{\frac{\pi(z_i|x)}{\piref(z_i|x)}}\right]+\lambda\sum\nolimits^{N}_{i=1}(\pi(z_i|x)-1) 
\end{equation*}
To find the solution to this problem, since we are optimizing over a probability distribution $\pi(z|x)$, we can compute the partial derivative of the objective $\mathcal{L}(\pi(z|x),\lambda)$ with respect to each coordinate $\pi(z_i|x)$. Setting the partial derivative to zero, for each $z_i$, we have:
\begin{equation*}
    \frac{\partial \mathcal{L}(\pi(z|x),\lambda)}{\partial\pi(z_i|x)}=r(x,z_i)-\beta\left(\log{\frac{\pi(z_i|x)}{\piref(z_i|x)}+1}\right)+\lambda=0
\end{equation*}
By rearranging this equation, we can get:
\begin{equation*}
    \frac{\pi(z_i|x)}{\piref(z_i|x)}=\exp(\frac{r(x,z_i)+\lambda-\beta}{\beta})\Rightarrow\pi(z_i|x)\propto\piref(z_i|x)\exp(\frac{r(x,z_i)}{\beta})
\end{equation*}
Plugging in the constraint that $\sum^{N}_{i=1}\pi(z_i|x)=1$, for each $z_i$, we obtain the solution:
\begin{equation*}
\pi_{r}(z_i|x)=\frac{\piref(z_i\mid x)\exp\left(\frac{1}{\beta}r(x, z_i)\right)}{\sum^{N}_{j=1}\piref(z_j\mid x)\exp\left(\frac{1}{\beta}r(x, z_j)\right)}
\end{equation*}
\end{proof}
Here we use the subscript notation $\pi_{r}$ to indicate that the policy is derived from the reward function $r(x, z)$. For simplicity, we omit the superscript $*$, but $\pi_{r}$ still represents the optimized policy. 

Intuitively, the optimized policy $\pi_{r}$ reweights the original policy $\piref$ with the exponential reward term $\exp(\frac{1}{\beta}r(x, z))$: latent thinking trajectories with higher reward $r(x,z)$ will have higher probability of being selected, while trajectories with lower reward will have lower probability of being selected. The weight $\beta$ controls how strong this adjustment is: when $\beta$ is small, the policy becomes more ``greedy'' and focuses heavily on the high-rewarded latent thinking trajectories; when $\beta$ is large, it stays closer to the original policy $\piref$.
\subsection{Proof for Theorem~\ref{thm:sampling_probability}}\label{appendix:sampling_probability_proof}
\begin{theorem}\label{thm:sampling_probability_appendix}
    In Algorithm~\ref{alg:latent_probabilistic_sampling}, for each $i$, the probability of $z_i$ being drawn and accepted is $\Pr(z_i|u_i<\phi_i,x)=\pi_{r}(z_i|x)$.
\end{theorem}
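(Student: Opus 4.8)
The plan is to compute the probability that a given candidate $z_i$ is the one returned on a successful draw inside the \texttt{while} loop, conditioning on acceptance. First I would fix the candidate set $\{z_j\}_{j=1}^N$ produced by the first \texttt{for} loop together with $r_{\max}=\max_j r(z_j,x)$, and analyze a single iteration of the \texttt{while} loop. In that iteration the index is drawn uniformly, so $z_i$ is proposed with probability $1/N$, and independently $u_i\sim\mathrm{Uniform}(0,1)$, so conditioned on proposing $z_i$ the acceptance event $\{u_i<\phi_i\}$ has probability $\phi_i=\exp\bigl((r(z_i,x)-r_{\max})/\beta\bigr)$. Hence the joint probability of ``propose $z_i$ and accept'' in one iteration is $\tfrac1N\phi_i=\tfrac1N\exp\bigl((r(z_i,x)-r_{\max})/\beta\bigr)$, and the probability of accepting \emph{something} in one iteration is $\tfrac1N\sum_{j=1}^N\exp\bigl((r(z_j,x)-r_{\max})/\beta\bigr)$.

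Next I would apply the standard acceptance--rejection argument: the returned sample is distributed as a proposal conditioned on acceptance, because the loop simply repeats i.i.d.\ iterations until the first acceptance, and the first accepted draw has the conditional law of one iteration given acceptance. Therefore
\begin{equation*}
\Pr(z_i \mid u_i<\phi_i, x)=\frac{\tfrac1N\exp\bigl((r(z_i,x)-r_{\max})/\beta\bigr)}{\tfrac1N\sum_{j=1}^N\exp\bigl((r(z_j,x)-r_{\max})/\beta\bigr)}.
\end{equation*}
The factors $1/N$ cancel, and so does the common factor $\exp(-r_{\max}/\beta)$ in numerator and denominator, leaving $\exp\bigl(r(z_i,x)/\beta\bigr)\big/\sum_{j=1}^N\exp\bigl(r(z_j,x)/\beta\bigr)$. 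To match the statement of Theorem~\ref{thm:sample_distribution_solution}, I would note that the candidates $z_j$ were themselves drawn i.i.d.\ from $\piref(z\mid x)$, so in the idealized continuous setting each candidate carries an implicit density weight $\piref(z_j\mid x)$; reintroducing this weight (or, in the discrete view, recognizing that the uniform draw over the sampled multiset already reflects $\piref$) turns the expression into $\dfrac{\piref(z_i\mid x)\exp\left(\tfrac1\beta r(x,z_i)\right)}{\sum_{j=1}^N\piref(z_j\mid x)\exp\left(\tfrac1\beta r(x,z_j)\right)}=\pi_r(z_i\mid x)$, which is exactly the closed-form solution from Theorem~\ref{thm:sample_distribution_solution}.

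The main obstacle I anticipate is precisely this last bookkeeping step: being careful about whether the $\piref$ weights appear explicitly or are absorbed into the sampling of the candidate set. The clean way to handle it is to work throughout in the ``empirical'' picture where $\pi(z\mid x)$ is a vector over the fixed sampled set $\{z_i\}_{i=1}^N$ (as in the proof of Theorem~\ref{thm:sample_distribution_solution}), in which case $\piref(z_i\mid x)$ is the empirical weight $1/N$ for each drawn $z_i$ and the uniform proposal in the algorithm is exactly sampling from $\piref$ restricted to the candidate set; then the cancellation above is literally the claimed identity. A secondary point worth stating explicitly is well-definedness of the loop: since $\phi_i>0$ for every candidate (as $r$ is finite) and $r_{\max}$ is attained, at least one candidate has $\phi_i=1$, so the acceptance probability per iteration is strictly positive and the \texttt{while} loop terminates almost surely, collecting $M$ i.i.d.\ draws from $\pi_r(\cdot\mid x)$ by independence of successive iterations.
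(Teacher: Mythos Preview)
Your proposal is correct and follows essentially the same route as the paper: both compute the joint probability of proposing $z_i$ and accepting, divide by the total acceptance probability via Bayes' rule, and identify the result with $\pi_r(z_i\mid x)$. The paper writes the proposal probability directly as $\piref(z_i\mid x)$ rather than $1/N$ and so sidesteps the bookkeeping you flag; your resolution of that point (treating $\piref$ on the sampled set as the empirical weight $1/N$) is exactly the identification the paper implicitly uses, and your added remarks on termination are a welcome bonus.
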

\begin{proof}
    In Algorithm~\ref{alg:latent_probabilistic_sampling}, since the distribution $\pi_{r}(z|x)$ is difficult to directly sample from, we would like to draw candidate samples $z$ from the distribution $\piref(z|x)$, and only accept those samples that follow the distribution $\pi_{r}(z|x)$ with probability $\frac{\pi_{r}(z|x)}{M\cdot\piref(z|x)}$. Here $M$ is a constant, and for the acceptance probability to be valid, it must satisfy $\frac{\pi_{r}(z|x)}{M\cdot\piref(z|x)}\leq 1$, that is, $M\geq\frac{\pi_{r}(z_i|x)}{\piref(z_i|x)}$ for each $z_i$. We choose the smallest possible $M$ so that each $z_i$ has the highest chance of being accepted, because a tight 
$M$ avoids unnecessary rejections and makes the algorithm more efficient. Therefore, the value of $M$ can be calculated as:
    \begin{align*}
  M=\max_{1\leq i\leq N}\{\frac{\pi_{r}(z_i|x)}{\piref(z_i|x)}\}&=\max_{1\leq i\leq N}\{\frac{\exp\left(\frac{1}{\beta}r(x, z_i)\right)}{\sum^{N}_{j=1}\piref(z_j\mid x)\exp\left(\frac{1}{\beta}r(x, z_j)\right)}\}\\&=\frac{\exp\left(\frac{1}{\beta}r_{\text{max}}\right)}{\sum^{N}_{j=1}\piref(z_j\mid x)\exp\left(\frac{1}{\beta}r(x, z_j)\right)}
    \end{align*} where $r_{\text{max}}$ is the maximum reward calculated in Algorithm~\ref{alg:latent_probabilistic_sampling}. Then we can get the acceptance probability $\phi_i$ for each $z_i$:
    \begin{equation*}
    \phi_i=\frac{\pi_{r}(z_i|x)}{M\cdot\piref(z_i|x)}=\frac{\exp\left(\frac{1}{\beta}r(x, z_i)\right)}{M\cdot\sum^{N}_{j=1}\piref(z_j\mid x)\exp\left(\frac{1}{\beta}r(x, z_j)\right)}=\exp((r(x, z_i)-r_{\text{max}})/\beta)
  \end{equation*}
For each candidate $z_i$ we have:
\begin{equation*}
\Pr(z_i,\ u_i<\phi_i\mid x)
= \piref(z_i\mid x)\cdot \phi_i
= \piref(z_i\mid x)\cdot \frac{\pi_{r}(z_i|x)}{M\cdot\piref(z_i|x)}
= \frac{\pi_{r}(z_i|x)}{M}.
\end{equation*}
The total probability of acceptance is:
\begin{equation*}
\Pr(u_i<\phi_i\mid x)
= \sum_{j=1}^N \Pr(z_j,\ u_i<\phi_i\mid x)
= \sum_{j=1}^N \frac{\pi_{r}(z_j|x)}{M}
= \frac{1}{M}\sum_{j=1}^N \pi_{r}(z_j|x)=\frac{1}{M}.
\end{equation*}
Therefore, by Bayes rule, the probability of $z_i$ being drawn and accepted is: 
\begin{equation*}
\Pr(z_i \mid u_i<\phi_i, x)
=\frac{\Pr(z_i,\ u_i<\phi_i,\mid x)}{\Pr(u_i<\phi_i\mid x)}= \frac{\frac{\pi_{r}(z_i|x)}{M}}{\frac{1}{M}}
= \pi_{r}(z_i|x).
\end{equation*}
\end{proof}
\subsection{Theoretical Analysis on Correctness Rate}\label{appendix:theoretical_analysis_correctness_rate}
To analyze the expected correctness rate of the \LTO algorithm using the trained latent classifier as \LRM, we first introduce the notion of a \emph{perfect reward model}, which serves as an oracle for evaluating the correctness of latent thinking trajectories. This formalization provides a reference point for quantifying the performance of the latent policy derived from the trained \LRM: 
\begin{definition}[Perfect reward model]
A perfect reward model $r^{*}(x, z)$ is a function that always assigns a value of 1.0 if the latent thinking trajectory $z$ is correct for question $x$, and 0.0 if the latent thinking trajectory $z$ is incorrect for question $x$. Using this definition, for a question $x$, the expected correctness rate of a latent policy $\pi$ can be represented as $\mathbb{E}_{z\sim\pi} r^{*}(x,z)$.
\end{definition}
Next, we introduce the following theorem to measure how the expected correctness rate of $z\sim\pi_{r}(z|x)$ (the policy derived from the trained \LRM) relates to that of $z\sim\pi_{r^{*}}(z|x)$ (the policy derived from the perfect reward model):
\begin{theorem}\label{thm:reward_bound_appendix}
    For a question $x$, for each sample $z_i$, if the error between the trained reward model $r(x,z_i)$ and the perfect reward model $r^{*}(x,z_i)$ is bounded by $\epsilon$, that is, $\left|r(x,z_i)-r^{*}(x,z_i)\right|\leq\epsilon$, then the performance gap of using an imperfect reward model is upper bounded by $\left|\mathbb{E}_{z\sim\pi_{r}(z|x)}r^{*}(x,z)-\mathbb{E}_{z\sim\pi_{r^{*}}(z|x)}r^{*}(x,z)\right|\leq\sqrt{\frac{4\epsilon}{\beta}}$
\end{theorem}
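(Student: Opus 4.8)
The plan is to bound the deviation between the two reweighted policies $\pi_r$ and $\pi_{r^*}$ in a suitable divergence, then convert that into a bound on the difference of expectations of the bounded functional $r^*$. First I would observe that both $\pi_r(z_i|x)$ and $\pi_{r^*}(z_i|x)$ are Gibbs-type reweightings of the same base measure $\piref$ over the finite sample set $\{z_i\}_{i=1}^N$, as given by Theorem~\ref{thm:sample_distribution_solution}: $\pi_r(z_i|x)\propto\piref(z_i|x)\exp(\tfrac1\beta r(x,z_i))$ and similarly with $r^*$. Since $|r(x,z_i)-r^*(x,z_i)|\le\epsilon$ for every $i$, the ratio of the unnormalized weights, $\exp(\tfrac1\beta(r(x,z_i)-r^*(x,z_i)))$, lies in $[e^{-\epsilon/\beta},e^{\epsilon/\beta}]$ pointwise. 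A short computation then shows the log-likelihood ratio $\log\frac{\pi_r(z_i|x)}{\pi_{r^*}(z_i|x)}$ is bounded in absolute value by $2\epsilon/\beta$ (the factor $2$ coming from the normalizing constants, which themselves move by at most $e^{\pm\epsilon/\beta}$).

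Next I would turn this pointwise log-ratio bound into a KL bound: $\kldiv(\pi_{r^*}\|\pi_r)\le 2\epsilon/\beta$ (and symmetrically for the reversed KL), simply because the integrand defining the KL is bounded by the sup of the log-ratio. Then Pinsker's inequality gives a total-variation bound $\|\pi_r-\pi_{r^*}\|_{\mathrm{TV}}\le\sqrt{\tfrac12\kldiv(\pi_{r^*}\|\pi_r)}\le\sqrt{\epsilon/\beta}$. Finally, since $r^*(x,\cdot)$ takes values in $[0,1]$, the difference of expectations under two distributions is controlled by total variation: $\bigl|\E_{z\sim\pi_r}r^*(x,z)-\E_{z\sim\pi_{r^*}}r^*(x,z)\bigr|\le\|r^*\|_\infty\cdot 2\|\pi_r-\pi_{r^*}\|_{\mathrm{TV}}\le 2\sqrt{\epsilon/\beta}=\sqrt{4\epsilon/\beta}$, which is exactly the claimed bound.

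Alternatively, one could avoid Pinsker and work directly: write the difference of expectations as $\sum_i r^*(x,z_i)(\pi_r(z_i|x)-\pi_{r^*}(z_i|x))$, bound $|\pi_r(z_i|x)-\pi_{r^*}(z_i|x)|\le\pi_{r^*}(z_i|x)\,|e^{\delta_i}-1|$ with $|\delta_i|\le 2\epsilon/\beta$, and use $|e^\delta-1|\le|\delta|e^{|\delta|}$; but this route produces a bound like $\tfrac{2\epsilon}\beta e^{2\epsilon/\beta}$ rather than the clean $\sqrt{4\epsilon/\beta}$, so the KL/Pinsker path is the one that matches the stated constant (and is tighter in the small-$\beta$ regime). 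I would therefore carry out the argument via the log-ratio bound, KL, and Pinsker.

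\textbf{Main obstacle.} The delicate point is getting the constant right in the log-ratio bound — specifically, tracking how the normalizing denominators $\sum_j\piref(z_j|x)\exp(\tfrac1\beta r(x,z_j))$ and its $r^*$ counterpart differ, so that the total bound on $|\log(\pi_r/\pi_{r^*})|$ is $2\epsilon/\beta$ and not something larger; from there, Pinsker and boundedness of $r^*$ are routine. One should double check that $\sqrt{\tfrac12\cdot\tfrac{2\epsilon}{\beta}}\cdot 2 = \sqrt{\tfrac{4\epsilon}{\beta}}$ lines up, i.e., the $\tfrac12$ in Pinsker and the factor $2$ from TV-to-expectation conversion combine to give precisely the stated $\sqrt{4\epsilon/\beta}$.
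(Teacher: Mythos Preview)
Your proposal is correct and follows the same high-level strategy as the paper: control the expectation gap by the $\ell_1$ (total-variation) distance using $r^*\in[0,1]$, apply Pinsker's inequality, and then bound the KL divergence between $\pi_r$ and $\pi_{r^*}$ by $2\epsilon/\beta$. The only substantive difference is in how the KL bound is obtained. The paper expands $\kldiv(\pi_r\|\pi_{r^*})$ explicitly from the Gibbs form, applies Jensen's inequality to the term $-\log\sum_j\pi_{r^*}(z_j|x)\exp\bigl(\tfrac1\beta(r(x,z_j)-r^*(x,z_j))\bigr)$, and arrives at $\kldiv\le\tfrac1\beta\mathbb{E}_{\pi_r}|r-r^*|+\tfrac1\beta\mathbb{E}_{\pi_{r^*}}|r-r^*|\le 2\epsilon/\beta$. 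Your route---bounding the pointwise log-density ratio $|\log(\pi_r/\pi_{r^*})|\le 2\epsilon/\beta$ directly (one $\epsilon/\beta$ from the numerator weight, one from the normalizing constant), then using $\kldiv=\mathbb{E}[\log\text{ratio}]\le\sup\log\text{ratio}$---reaches the same $2\epsilon/\beta$ without Jensen. Both yield identical constants; your argument is a slightly more elementary shortcut, while the paper's explicit computation makes the two $\epsilon/\beta$ contributions appear as expectations rather than suprema.
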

\begin{proof}
The expectation of the performance gap $\Delta$ between using the trained reward model and using the perfect reward model is:
\begin{align*}
\Delta &= |\mathbb{E}_{z\sim\pi_{r}(z|x)}r^{*}(x,z)-\mathbb{E}_{z\sim\pi_{r^{*}}(z|x)}r^{*}(x,z)|\\
&=\sum\nolimits^{N}_{i=1}|\pi_{r}(z_i|x)-\pi_{r^{*}}(z_i|x)|\cdot r^{*}(x,z_i)\\
&\leq \sum\nolimits^{N}_{i=1}|\pi_{r}(z_i|x)-\pi_{r^{*}}(z_i|x)|\cdot 1
\end{align*}
Using Pinsker's inequality~\citep{cover2006elements}, we have:
\begin{equation*}
\sum\nolimits^{N}_{i=1}|\pi_{r}(z_i|x)-\pi_{r^{*}}(z_i|x)|\leq \sqrt{2\kldiv\left(\pi_{r}(z|x)||\pi_{r^{*}}(z|x)\right)}
\end{equation*}
Recall that in Theorem~\ref{thm:sample_distribution_solution}, we can get the solution $\pi_{r}(z_i|x)=\frac{\piref(z_i\mid x)\exp\left(\frac{1}{\beta}r(x, z_i)\right)}{\sum^{N}_{j=1}\piref(z_j\mid x)\exp\left(\frac{1}{\beta}r(x, z_j)\right)}$, $\pi_{r^{*}}(z_i|x)=\frac{\piref(z_i\mid x)\exp\left(\frac{1}{\beta}r^{*}(x, z_i)\right)}{\sum^{N}_{j=1}\piref(z_j\mid x)\exp\left(\frac{1}{\beta}r^{*}(x, z_j)\right)}$. Therefore, 
the KL divergence between the policy distributions can be written as:
\begin{align*}
   &\kldiv\left(\pi_{r}(z|x)||\pi_{r^{*}}(z|x)\right)\\
   =&\sum\nolimits^{N}_{i=1}\pi_{r}(z_i|x)\log\frac{\pi_{r}(z_i|x)}{\pi_{r^{*}}(z_i|x)}\\
    =&\sum\nolimits^{N}_{i=1}\pi_{r}(z_i|x)\log\frac{\frac{\cancel{\piref(z_i\mid x)}\exp\left(\frac{1}{\beta}r(x, z_i)\right)}{\sum^{N}_{j=1}\piref(z_j\mid x)\exp\left(\frac{1}{\beta}r(x, z_j)\right)}}{\frac{\cancel{\piref(z_i\mid x)}\exp\left(\frac{1}{\beta}r^{*}(x, z_i)\right)}{\sum^{N}_{j=1}\piref(z_j\mid x)\exp\left(\frac{1}{\beta}r^{*}(x, z_j)\right)}}\\
    =&\sum\nolimits^{N}_{i=1}\pi_{r}(z_i|x)\bigg[\log\exp(\frac{1}{\beta}(r(x,z_i)-r^{*}(x,z_i)))-\log\frac{\sum^{N}_{j=1}\piref(z_j\mid x)\exp\left(\frac{1}{\beta}r(x, z_j)\right)}{\sum^{N}_{j=1}\piref(z_j\mid x)\exp\left(\frac{1}{\beta}r^{*}(x, z_j)\right)}\bigg]\\
    =&\sum\nolimits^{N}_{i=1}\pi_{r}(z_i|x)\bigg[(\frac{1}{\beta}(r(x,z_i)-r^{*}(x,z_i)))\\&-\log\frac{\sum^{N}_{j=1}\piref(z_j\mid x)\exp\left(\frac{1}{\beta}r^{*}(x, z_j)\right)\exp\left(\frac{1}{\beta}(r(x,z_j)-r^{*}(x, z_j))\right)}{\sum^{N}_{j=1}\piref(z_j\mid x)\exp\left(\frac{1}{\beta}r^{*}(x, z_j)\right)}\bigg]\\
    =&\sum\nolimits^{N}_{i=1}\pi_{r}(z_i|x)\bigg[(\frac{1}{\beta}(r(x,z_i)-r^{*}(x,z_i)))\\&-\log\sum^{N}_{j=1}\left(\frac{\piref(z_j\mid x)\exp\left(\frac{1}{\beta}r^{*}(x, z_j)\right)}{\sum^{N}_{j=1}\piref(z_j\mid x)\exp\left(\frac{1}{\beta}r^{*}(x, z_j)\right)}\right)\exp\left(\frac{1}{\beta}(r(x,z_j)-r^{*}(x, z_j))\right)\bigg]\\
    =&\sum\limits^{N}_{i=1}\pi_{r}(z_i|x)\bigg[\frac{1}{\beta}(r(x,z_i)-r^{*}(x,z_i))-\log\sum\limits^{N}_{j=1}\pi_{r^{*}}(z_j|x)\exp\!\bigg(\frac{1}{\beta}(r(x,z_j)-r^{*}(x,z_j))\!\bigg)\bigg]
    \end{align*}
    Using Jensen's inequality, we have:
    \begin{align*}
        &-\log\sum\limits^{N}_{j=1}\pi_{r^{*}}(z_j|x)\exp\!\bigg(\frac{1}{\beta}(r(x,z_j)-r^{*}(x,z_j))\!\bigg)\\
        &\leq -\sum\limits^{N}_{j=1}\pi_{r^{*}}(z_j|x)\log\exp\!\bigg(\frac{1}{\beta}(r(x,z_j)-r^{*}(x,z_j))\!\bigg)=-\sum\limits^{N}_{j=1}\pi_{r^{*}}(z_j|x)\!\bigg(\frac{1}{\beta}(r(x,z_j)-r^{*}(x,z_j))\!\bigg)
    \end{align*}
Therefore, we have:
\begin{align*}
    &\kldiv\left(\pi_{r}(z|x)||\pi_{r^{*}}(z|x)\right)\\
    \leq&\sum\limits^{N}_{i=1}\pi_{r}(z_i|x)\bigg[\frac{1}{\beta}(r(x,z_i)-r^{*}(x,z_i))-\sum\limits^{N}_{j=1}\pi_{r^{*}}(z_j|x)\!\bigg(\frac{1}{\beta}(r(x,z_j)-r^{*}(x,z_j))\!\bigg)\bigg]\\
    \leq&\sum\limits^{N}_{i=1}\pi_{r}(z_i|x)\bigg[\frac{1}{\beta}|r(x,z_i)-r^{*}(x,z_i)|+\sum\limits^{N}_{j=1}\pi_{r^{*}}(z_j|x)\!\bigg(\frac{1}{\beta}|r(x,z_j)-r^{*}(x,z_j)|\!\bigg)\bigg]\\
    \leq&\sum\nolimits^{N}_{i=1}\pi_{r}(z_i|x)\left[\frac{\epsilon}{\beta}+\frac{\epsilon}{\beta}\sum\nolimits^{N}_{j=1}\pi_{r^{*}}(z_j|x)\right]
\end{align*}
In Theorem~\ref{thm:sample_distribution_solution}, we have the constraint that $\sum^{N}_{j=1}\pi_{r}(z_j|x)=1$, and $\sum^{N}_{j=1}\pi_{r^{*}}(z_j|x)=1$. Therefore, the KL divergence between the policy distributions can be written as:
\begin{align*}
\kldiv\left(\pi_{r}(z|x)||\pi_{r^{*}}(z|x)\right)&\leq\sum\nolimits^{N}_{i=1}\pi_{r}(z_i|x)\left[\frac{\epsilon}{\beta}+\frac{\epsilon}{\beta}\sum\nolimits^{N}_{j=1}\pi_{r^{*}}(z_j|x)\right]\\
&=\sum\nolimits^{N}_{i=1}\pi_{r}(z_i|x)\left[\frac{\epsilon}{\beta}+\frac{\epsilon}{\beta}\right]=1\cdot\frac{2\epsilon}{\beta}=\frac{2\epsilon}{\beta}
\end{align*}
Putting all the results together, we get:
\begin{equation*}
    |\mathbb{E}_{z\sim\pi_{r}(z|x)}r^{*}(x,z)-\mathbb{E}_{z\sim\pi_{r^{*}}(z|x)}r^{*}(x,z)|\leq\sqrt{\frac{4\epsilon}{\beta}}
\end{equation*}
\end{proof}
This theorem establishes a bound on the expected correctness rate of trajectories $z$ generated using the trained \LRM in comparison to the perfect reward model. As the performance of the classifier improves, the error $\epsilon$ will drop, leading to a tighter bound and higher expected correctness rate.
Notably, even if the latent policy of the base model is not explicitly optimized, a more accurate \LRM with a smaller $\epsilon$ enables \LTO to more accurately select only the correct latent thinking trajectories, thereby improving the expected correctness rate.
Empirically, as shown in Section~\ref{sec:training_latent_classifier}, the classifier achieves a very high AUC-ROC, implying that $\epsilon$ is small in practice. 
From a theoretical perspective, standard generalization bounds for binary classifiers guarantee that the reward error $\epsilon$ is controlled by the classification error on the training set plus a complexity term of order $O(\sqrt{1/S})$ with $S$ being the number of training samples~\citep{bartlett2002rademacher, bartlett2017spectrally}.
Consequently, with a well-trained classifier as the reward model, this bound guarantees that the expected correctness rate under the trained reward model closely matches that of the perfect reward model.
\section{Experimental Details}
\subsection{Dataset Details}
\label{appendix:dataset_details}
We select five datasets from three domains for a comprehensive evaluation. The details of datasets are described as follows:

$\bullet$ {\it Math Problems}

\begin{itemize}[leftmargin=20px]
    \item {\bf GSM8K}~\citep{cobbe2021training} is a collection of grade-school math word problems written by human annotators. The dataset is designed to evaluate arithmetic and reasoning skills at the grade-school level and serves as a benchmark for testing the multi-step reasoning capability of LLMs. It is divided into 7,473 training problems and 1,318 test problems, and each problem is paired with a detailed step-by-step solution based on basic arithmetic operations.
    \item {\bf GSM-Symbolic}~\citep{mirzadeh2025gsmsymbolic} is a more challenging extension of GSM8K that generates diverse math problem variants using symbolic templates. It includes 5,000 test problems but does not provide a training split.
    \item {\bf SVAMP}~\citep{patel2021nlp} is also a collection of grade-school math word problems. It is constructed by applying systematic variations to seed examples from the ASDiv dataset~\citep{miao2020diverse} to discourage shortcut reasoning patterns. The dataset is split into 35,381 training problems and 1,000 test problems.
\end{itemize}

$\bullet$ {\it Commonsense Reasoning}

\begin{itemize}[leftmargin=20px]
    \item {\bf CommonsenseQA}~\citep{talmor2019commonsenseqa} is a multiple-choice question answering benchmark dataset designed to evaluate the capability of LLMs to perform commonsense reasoning. It consists of 9,741 training problems and 1,221 test problems.
\end{itemize}

$\bullet$ {\it Code Generation}

\begin{itemize}[leftmargin=20px]
    \item {\bf MBPP}~\citep{austin2021program} is a benchmark dataset for evaluating the capability of LLMs to generate programming codes. It consists of Python programming problems covering basic algorithmic and data-processing tasks. Each problem is paired with a natural language description, a reference implementation, and multiple test cases. The generated code is considered correct only if it successful passes all the test cases. The dataset is divided into 374 training problems and 483 test problems.
\end{itemize}
\subsection{Implementation Details}
\label{appendix:implementation_details}
To train the latent classifier as the \LRM, we generate multiple latent thinking trajectory-answer pairs for each dataset. 
For GSM8K, SVAMP, and CommonsenseQA, we sample 5 different latent thinking trajectories and answers per problem from the training split. 
For MBPP, which contains only 373 training problems, we sample 50 latent thinking trajectories and answers per problem to ensure sufficient training data. 
The latent classifier is trained to predict the correctness of the answer from the latent thoughts on each dataset. For GSM-Symbolic, which does not include a training split, we use the classifier trained on GSM8K.
We evaluate the performance of baselines and our approach on the test split of each dataset. For \LTO and the baselines, we allocate a sampling budget of $N{=}20$ per problem. Each method selects a single final solution from these candidates (i.e., the number of required samples $M{=}1$). For baselines, the solution with the highest evaluation score (e.g., verbal evaluation score, confidence score or CoE score) will be selected. We adopt the default setting with sampling budget $N{=}20$ and latent thinking steps $T{=}32$, and the performances with different sampling budget, different $\beta$s and different number of thinking steps are studied in Section~\ref{appendix:performance_sampling_budget}, Section~\ref{appendix:performance_betas} and Section~\ref{sec:performance_thinking_steps}, respectively.

For {\LRM}s on general LLMs, we follow the same training configuration as in Appendix~\ref{appendix:training_details_latent_classifier}. For each LLM, we configure the corresponding \LRM with the same hidden dimensionality, number of attention heads, and MLP hidden size as that LLM, following the training setup in Appendix~\ref{appendix:training_details_latent_classifier}. For example, if an LLM has hidden dimensionality 4096, number of attention heads 32, and MLP hidden size 14336, then the LRM also has the same hidden dimensionality 4096, attention heads 32 and MLP hidden size 14336. The hidden states from all layers are stacked together to form a sequence of latent thoughts. For example, the hidden representation from layer 1 is treated as latent thought step 1, the representation from layer 2 as step 2, and so on. This stacked sequence of latent thoughts from general LLMs serves as the input to the \LRM, and it has exactly the same format as the sequences of latent thoughts from \LRLM. To ensure that general LLMs will generate different latent representations for multiple samples of latent representations, we randomly sample one example (problem-answer pair) from the training split of each dataset, and append this example as an in-context demonstration to the input question. For GSM-Symbolic, which lacks a training set, we instead draw examples from the training set of GSM8K. Because a new example is drawn at each iteration, the input tokens and consequently the latent representations will be different across multiple samples.
\begin{figure*}
    \centering
    \begin{subfigure}[t]{0.49\textwidth}
    \centering
    \includegraphics[width=\textwidth]{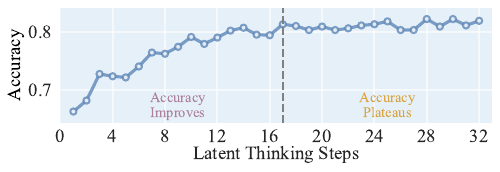}
    \caption{Accuracy on GSM8K.}
    \label{fig:classifier_accuracy_gsm8k}
    \end{subfigure}
    \begin{subfigure}[t]{0.49\textwidth}
    \centering
    \includegraphics[width=\textwidth]{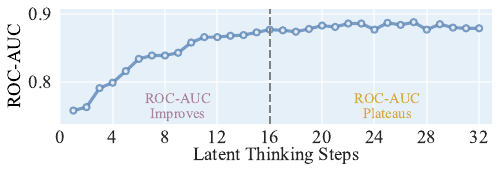}
    \caption{ROC-AUC on GSM8K.}
    \end{subfigure}
    
    \begin{subfigure}[t]{0.49\textwidth}
    \centering
    \includegraphics[width=\textwidth]{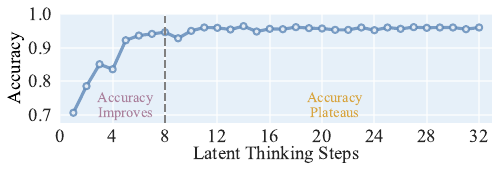}
    \caption{Accuracy on SVAMP.}
    \end{subfigure}
    \begin{subfigure}[t]{0.49\textwidth}
    \centering
    \includegraphics[width=\textwidth]{figures/classifier_roc_auc_svamp.pdf}
    \caption{ROC-AUC on SVAMP.}
    \end{subfigure}


    \begin{subfigure}[t]{0.49\textwidth}
    \centering
    \includegraphics[width=\textwidth]{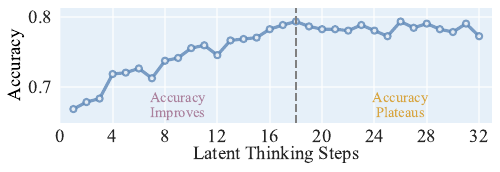}
    \caption{Accuracy on MBPP.}
    \end{subfigure}
    \begin{subfigure}[t]{0.49\textwidth}
    \centering
    \includegraphics[width=\textwidth]{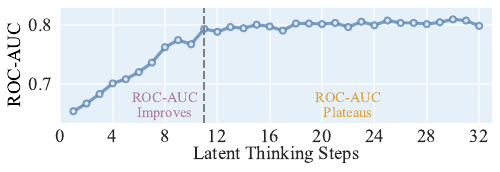}
    \caption{ROC-AUC on MBPP.}
    \end{subfigure}
    
    \begin{subfigure}[t]{0.49\textwidth}
    \centering
    \includegraphics[width=\textwidth]{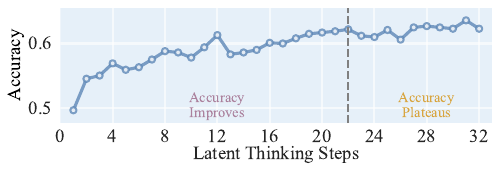}
    \caption{Accuracy on CommonsenseQA.}
    \label{fig:classifier_accuracy_mbpp}
    \end{subfigure}
    \begin{subfigure}[t]{0.49\textwidth}
    \centering
    \includegraphics[width=\textwidth]{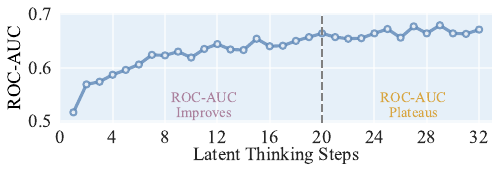}
    \caption{ROC-AUC on CommonsenseQA.}
    \end{subfigure}


    \caption{Test-set performance of the latent classifier (measured by Accuracy and ROC-AUC) on the test set trained with varying numbers of latent thinking steps on the SVAMP and MBPP datasets.}
    \label{fig:classifier_additional_performance_results}
\end{figure*}
\begin{figure*}
    \centering
\begin{subfigure}[t]{0.19\textwidth}
    \centering
    \includegraphics[width=\textwidth]{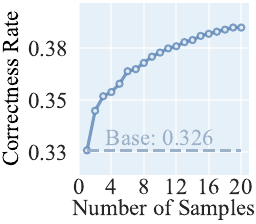}
    \caption{GSM8K}
    \label{fig:sample_numbers_gsm8k}
    \end{subfigure}
        \begin{subfigure}[t]{0.19\textwidth}
    \centering
    \includegraphics[width=\textwidth]{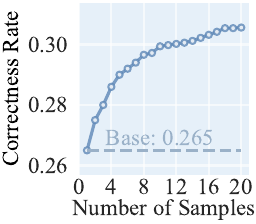}
    \caption{GSM-Symbolic}
    \label{fig:sample_numbers_gsmsymbolic}
    \end{subfigure} 
    \begin{subfigure}[t]{0.19\textwidth}
    \centering
    \includegraphics[width=\textwidth]{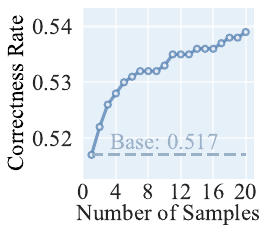}
    \caption{SVAMP}
    \label{fig:sample_numbers_svamp}
    \end{subfigure} 
 \begin{subfigure}[t]{0.19\textwidth}
    \centering
    \includegraphics[width=\textwidth]{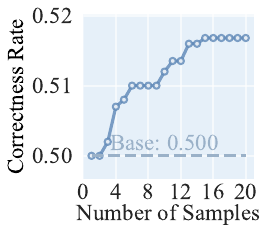}
    \caption{CQA}
    \label{fig:sample_numbers_commonsenseQA}
    \end{subfigure} 
     \begin{subfigure}[t]{0.19\textwidth}
    \centering
    \includegraphics[width=\textwidth]{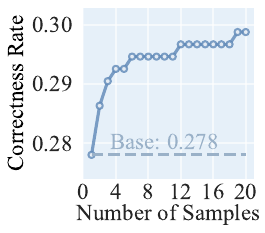}
    \caption{MBPP}
    \label{fig:sample_numbers_mbpp}
    \end{subfigure} 
    
    \caption{Performance of \LTO with different numbers of samples. ``CQA'' refers to the CommonsenseQA dataset. ``Base'' refers to the performance of the base model.}
    \label{fig:sample_numbers}
\end{figure*}
\begin{figure*}
    \centering
\begin{subfigure}[t]{0.19\textwidth}
    \centering
    \includegraphics[width=\textwidth]{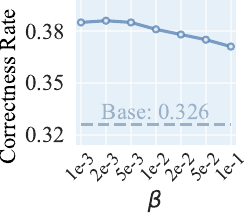}
    \caption{GSM8K}
    \label{fig:beta_gsm8k}
    \end{subfigure}
        \begin{subfigure}[t]{0.19\textwidth}
    \centering
    \includegraphics[width=\textwidth]{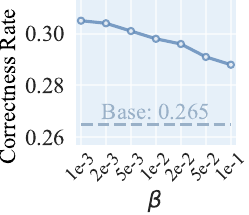}
    \caption{GSM-Symbolic}
    \label{fig:beta_gsmsymbolic}
    \end{subfigure} 
    \begin{subfigure}[t]{0.19\textwidth}
    \centering
    \includegraphics[width=\textwidth]{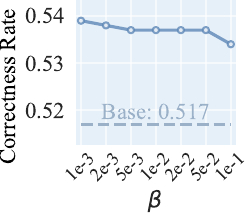}
    \caption{SVAMP}
    \label{fig:beta_svamp}
    \end{subfigure} 
 \begin{subfigure}[t]{0.19\textwidth}
    \centering
    \includegraphics[width=\textwidth]{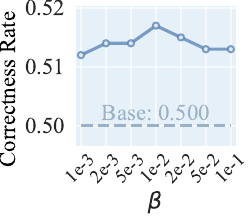}
    \caption{CQA}
    \label{fig:beta_commonsenseQA}
    \end{subfigure} 
     \begin{subfigure}[t]{0.19\textwidth}
    \centering
    \includegraphics[width=\textwidth]{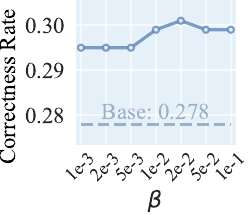}
    \caption{MBPP}
    \label{fig:beta_mbpp}
    \end{subfigure} 
    
    \caption{Performance of \LTO with different betas. ``CQA'' refers to the CommonsenseQA dataset. ``Base'' refers to the performance of the base model.}
    \label{fig:performance_betas}
\end{figure*}
\section{Additional Experimental Results}
\subsection{Additional Results on the Performance of the Latent Classifier}\label{appendix:additional_results_classifier}
Additional experimental results on the performance of latent classifier for \LRLM using different thinking steps on different datasets are shown in Figure~\ref{fig:classifier_additional_performance_results}.

Additional experimental results on the performance of latent classifier for general LLMs on different datasets are shown in Table~\ref{tab:classification_performance_general_llms}. In this setting, each {\LRM} is trained with the latent representations from all the layers of each general LLM.
\begin{table*}[t]
\centering
\small
\caption{Performance of the latent classifier on the test set for general LLMs on different datasets.}
\begin{tabular}{llccccccccc}
\toprule
                       \textbf{Model}& \textbf{Metric}& 
                       \centering\textbf{GSM8K}&\textbf{SVAMP} & \textbf{CommonsenseQA}  & \textbf{MBPP}
\\ \midrule \multirow{2}{*}{OLMo-7B} &Accuracy  &0.896
&0.854&0.652 &0.858\\ 
&ROC-AUC &0.851&0.899&0.708&0.882                
\\
\midrule \multirow{2}{*}{Llama-2-7B} &Accuracy   &0.836 
&0.919&0.681 &0.834\\ 
&ROC-AUC &0.858&0.970&0.738&0.822             \\

\midrule \multirow{2}{*}{Llama-2-13B}
&Accuracy  &0.805
&0.925&0.729 &0.805\\ 
&ROC-AUC &0.868&0.974&0.773&0.839                
\\
\midrule
\multirow{2}{*}{Mistral-7B} &Accuracy   &0.793  
&0.968&0.736 &0.741\\ 
&ROC-AUC &0.868&0.992&0.765&0.794               
\\
\bottomrule
\end{tabular}
\label{tab:classification_performance_general_llms}
\end{table*}

We can see that the latent classifier achieves strong performance on the test set for \LRLM and general LLMs across diverse datasets. These results demonstrate that latent thoughts encode appropriate reward signals that can indicate whether they will lead to the correct answer.
\subsection{Performance with Different Sampling Budget}
\label{appendix:performance_sampling_budget}
To investigate the performance of \LTO with different sampling budget $N$, we vary $N$ from $1$ to $20$ and report the performance of \LTO in Figure~\ref{fig:sample_numbers}.
Performance steadily improves as $N$ increases, as a larger $N$ enhances the diversity of sampled latent thoughts and increases the likelihood that at least one sampled latent thinking trajectory is correct. 
Moreover, even with a very small budget (e.g., $N=2$), \LTO can still achieve substantial performance improvement compared with the base model, demonstrating that \LTO is sample-efficient without the need for a large sampling budget.
\subsection{Performance with Different Betas}
\label{appendix:performance_betas}
To investigate the performance of \LTO with different $\beta$, we vary $\beta$ from $1e-3$ to $1e-1$ and report the performance of \LTO in Figure~\ref{fig:performance_betas}.
Across different values of $\beta$, \LTO consistently outperforms the base model, demonstrating that it can reliably improve the latent thinking processes with different choices of the hyperparameter.
\subsection{Performance with Different Numbers of Thinking Steps}\label{sec:performance_thinking_steps}
\begin{table*}[t]
\centering
\newcommand{\supscriptspace}{\makebox[\widthof{$^{*}$}]{}} 
\small
\setlength{\tabcolsep}{1pt}
\caption{Performance of \LTO with different numbers of thinking steps. For each thinking step, the best-performing method is highlighted in \textbf{bold}. $*$ indicates the improvement over the best runner-up is statistically significant with $p<0.05$.}
\begin{tabular}{llccccccccc}
\toprule
                       \textbf{Thinking Steps}& \textbf{Method}& 
                       \centering\textbf{GSM8K}&\textbf{GSM-Symbolic}&\textbf{SVAMP} & \textbf{CommonsenseQA}  & \textbf{MBPP}
\\ \midrule \multirow{3}{*}{16 Steps} &Base Model  &0.333
&0.269&0.503 &0.498&0.276\\ 
&Majority Voting &0.345&0.279&0.501&0.498&0.274                \\
&\LatentThinkingOptimization&\supscriptspace\textbf{0.434}$^{*}$&\supscriptspace\textbf{0.335}$^{*}$&\supscriptspace\textbf{0.560}$^{*}$&\supscriptspace\textbf{0.523}$^{*}$&\supscriptspace\textbf{0.295}$^{*}$\\
\midrule \multirow{3}{*}{24 Steps} &Base Model  &0.326 
&0.265&0.515 &0.507&0.282\\ 
&Majority Voting &0.334&0.274&0.513&0.509&\textbf{0.293} \\             &\LatentThinkingOptimization&\supscriptspace\textbf{0.398}$^{*}$&\supscriptspace\textbf{0.312}$^{*}$&\supscriptspace\textbf{0.549}$^{*}$&\supscriptspace\textbf{0.523}$^{*}$&\textbf{0.293} \\

\midrule \multirow{3}{*}{32 Steps}
&Base Model  &0.326
&0.265&0.517 &0.500&0.278\\ 
&Majority Voting &0.333&0.269&0.511&0.504&0.288                \\
&\LatentThinkingOptimization&\supscriptspace\textbf{0.378}$^{*}$&\supscriptspace\textbf{0.303}$^{*}$&\supscriptspace\textbf{0.539}$^{*}$&\supscriptspace\textbf{0.520}$^{*}$&\supscriptspace\textbf{0.295}$^{*}$\\
\bottomrule
\end{tabular}
\label{tab:thinking_steps}
\end{table*}

While most of our evaluation uses a fixed number of latent thinking steps, we also investigate the adaptability of \LTO to latent thinking trajectories of varying thinking steps. 
Specifically, for each dataset, we train the \LRM with the sampled latent thinking trajectories with varying number of thinking steps. We then test the performance of \LTO using this \LRM trained with varying number of thinking steps. 

From the experimental results in Table~\ref{tab:thinking_steps}, we can see that \LTO achieves a consistent improvement over the base model in all the cases, indicating that \LTO can be flexibly applied to latent thinking trajectories of varying numbers of thinking steps.
Interestingly, performance slightly declines as the number of steps increases.  This is attributed to the reduced diversity in the sampled latent thoughts and answers when longer thinking steps are used. For example, on SVAMP, when using 16 thinking steps, 427 problems have sampled answers that are all incorrect, 437 problems have sampled answers that are all correct, and 136 problems have both correct and incorrect answers. Therefore, the performance upper bound is $(437+136)/1000=0.573$. By comparison, when using 24 thinking steps, the split becomes $446/468/86$ with the performance upper bound calculated as $(468+85)/1000=0.554$; when using 32 thinking steps, the split becomes $454/486/60$ with the performance upper bound calculated as $(486+60)/1000=0.546$. 

While increasing the number of thinking steps slightly improves the expected correctness rate of the base model, it substantially reduces the diversity of sampled latent thoughts and answers, probably due to overthinking~\citep{sui2025stop}. 
As a result, fewer problems contain both correct and incorrect answers (i.e., diverse sets), leaving less room for improvement with \LTO. 
It is possible that there exists an optimal number of thinking steps that balances the expected correctness rate of the base model with the diversity of the latent thoughts and answers, and future work may design adaptive mechanisms to identify such optimal thinking steps and further improve the performance of \LTO.
\subsection{Performance Comparison on LLM Model Family}
\begin{table*}[t]
\centering
\small
\newcommand{\supscriptspace}{\makebox[\widthof{$^{*}$}]{}} 
\setlength{\tabcolsep}{4pt}
\caption{Performance comparison of the Llama model family. 
The best-performing method for each model is in \textbf{bold}. $*$ indicates the improvement over the best runner-up is statistically significant with $p < 0.05$.}
\begin{tabular}{llcccc}
\toprule
\textbf{Model} & \textbf{Method} & \textbf{GSM8K} & \textbf{GSM-Symbolic} & \textbf{CommonsenseQA} & \textbf{MBPP} \\
\midrule

\multirow{3}{*}{Llama-2-7B}
& Base Model                   & 0.223 & 0.204 & 0.399 & 0.189 \\
& Majority Voting              & 0.275 & 0.302 & 0.493 & 0.193 \\
& Latent Thinking Optimization 
& \supscriptspace\textbf{0.389}$^{*}$ 
& \supscriptspace\textbf{0.316}$^{*}$ 
& \supscriptspace\textbf{0.606}$^{*}$ 
& \supscriptspace\textbf{0.237}$^{*}$ \\
\midrule

\multirow{3}{*}{Llama-2-13B}
& Base Model                   & 0.306 & 0.273 & 0.398 & 0.247 \\
& Majority Voting              & 0.417 & 0.379 & 0.501 & 0.263 \\
& Latent Thinking Optimization 
& \supscriptspace\textbf{0.534}$^{*}$ 
& \supscriptspace\textbf{0.442}$^{*}$ 
& \supscriptspace\textbf{0.650}$^{*}$ 
& \supscriptspace\textbf{0.322}$^{*}$ \\
\midrule

\multirow{3}{*}{Llama-3-8B}
& Base Model                   & 0.784 & 0.736 & 0.742 & 0.560 \\
& Majority Voting              & 0.801 & 0.796 & 0.786 & 0.570 \\
& Latent Thinking Optimization 
& \supscriptspace\textbf{0.859}$^{*}$ 
& \supscriptspace\textbf{0.821}$^{*}$ 
& \supscriptspace\textbf{0.790}$^{*}$ 
& \supscriptspace\textbf{0.600}$^{*}$ \\
\bottomrule
\end{tabular}
\label{tab:llama_results}
\end{table*}

To further validate the effectiveness of \LTO on general LLMs, we conduct an additional experiment evaluating LTO on the widely-used Llama model family~\citep{touvron2023llama2,team2024llama}. The experimental results in Table~\ref{tab:llama_results} show that LTO consistently enhances the reasoning performance across all models, demonstrating its effectiveness in improving the latent thinking processes of the LLM model family.
\subsection{Performance on More Challenging Benchmarks}
\begin{table*}[t]
\centering
\small
\newcommand{\supscriptspace}{\makebox[\widthof{$^{*}$}]{}} 
\setlength{\tabcolsep}{4pt}
\caption{Performance of \LTO on more challenging benchmarks. The best-performing method for each model is in \textbf{bold}. $*$ indicates statistically significant improvement with $p<0.05$.}
\begin{tabular}{llcc}
\toprule
\textbf{Model} & \textbf{Method} & \textbf{MATH} & \textbf{GPQA} \\
\midrule

\multirow{3}{*}{Llama-3-8B}
& Base Model                   & 0.267 & 0.268 \\
& Majority Voting              & 0.335 & 0.276 \\
& Latent Thinking Optimization 
& \supscriptspace\textbf{0.375}$^{*}$ & \supscriptspace\textbf{0.310}$^{*}$ \\
\midrule

\multirow{3}{*}{Qwen-3-4B}
& Base Model                   & 0.552 & 0.347 \\
& Majority Voting              & 0.555 & 0.368 \\
& Latent Thinking Optimization 
& \supscriptspace\textbf{0.619}$^{*}$ & \supscriptspace\textbf{0.490}$^{*}$ \\
\bottomrule
\end{tabular}
\label{tab:challenging_benchmarks}
\end{table*}

To further validate the effectiveness of \LTO on general LLMs, we provide an additional analysis using two more recent LLMs (Llama-3-8B~\citep{team2024llama} and Qwen-3-4B~\citep{yang2025qwen3}) on two broader, frontier benchmarks (MATH~\citep{hendrycks2021measuring} and GPQA~\citep{rein2024gpqa}), which are known to be relatively noisy and pose more challenging reasoning conditions for LLMs. The results in Table~\ref{tab:challenging_benchmarks} show that \LTO consistently enhances the reasoning performance across all models and datasets, demonstrating its effectiveness and robustness in improving the latent thinking processes of the general LLMs on broader benchmarks under noisy and challenging conditions.
\subsection{Efficiency Analysis}
\label{appendix:efficiency_analysis}
We evaluate the efficiency of our framework from two perspectives: the training efficiency of \LRM and the sampling efficiency of \LTO. Our results demonstrate that \LRM requires only modest resources to train, and sampling answers with \LTO brings negligible additional cost during inference.
\begin{table*}[t]
\centering
\small
\newcommand{\newscriptspace}{\makebox[\widthof{0}]{}}
\caption{Comparison of the total training time and GPU memory usage of \LRM across different datasets and settings. ``General'' denotes the general reward model from Section~\ref{sec:generalist_reward_modeling}.}
\begin{tabular}{lccccc}
\toprule
                       & 
                       \textbf{GSM8K}&\textbf{SVAMP} & \textbf{CommonsenseQA}  & \textbf{MBPP} & \textbf{General}
\\ \midrule 
Total Training Time (h)&\newscriptspace0.85&\newscriptspace4.19&\newscriptspace1.05&\newscriptspace0.92&\newscriptspace6.12\\
GPU Memory Usage (GB)&10.39&10.40&10.39&10.40&10.39\\
\bottomrule
\end{tabular}
\label{tab:training_efficiency_analysis}
\end{table*}

\begin{table*}[t]
\centering
\newcommand{\newscriptspace}{\makebox[\widthof{e-0}]{}}
\newcommand{\newscriptspacebig}{\makebox[\widthof{e-00}]{}} 
\small
\caption{Comparison of the average computation time (seconds) of the base model inference and the latent reward computation per sample across five datasets.}
\begin{tabular}{lccccc}
\toprule
                       & 
                       \textbf{GSM8K}&\textbf{GSM-Symbolic}&\textbf{SVAMP} & \textbf{CommonsenseQA}  & \textbf{MBPP}
\\ \midrule 
Based Model Inference&\newscriptspace39.5&\newscriptspace43.0&\newscriptspacebig6.0&\newscriptspacebig7.3&\newscriptspace20.4\\
Latent Reward Computation&7.6e-02&7.6e-02&7.5e-02&7.5e-02&7.6e-02\\
\bottomrule
\end{tabular}
\label{tab:sampling_efficiency_analysis}
\end{table*}

\paragraph*{Training Efficiency of \LRM}
We analyze the training efficiency of the \LRM on \LRLM by measuring the total training time and GPU memory usage of \LRM across different datasets and settings. All the experiments are conducted on a single A100 GPU using the default 32 thinking steps. From the experimental results in Table~\ref{tab:training_efficiency_analysis}, we can see that the training of \LRM can be completed within reasonable time and modest memory budgets in all the settings. Such resource cost is significantly lower than that of language-based reward models~\citep{wang-etal-2024-math,lu2024autopsv}. These results demonstrate that reward modeling in the latent space offers a more efficient alternative to reward modeling in the natural language space.
\paragraph*{Sampling Efficiency of \LTO}
Compared to standard inference procedure, which directly samples latent thoughts and responses from the base model, \LTO introduces an additional step for latent reward computation. 
To evaluate the efficiency of this step on \LRLM, we compare the average computation time of the base model inference and
the latent reward computation per sample across five datasets. 
All the experiments are conducted on a single A100 GPU using the default 32 thinking steps. 
From the experimental results in Table~\ref{tab:sampling_efficiency_analysis}, we can see that the computation time of \LRM is orders of magnitude lower than the inference time of the base model, indicating that \LRM is highly efficient and incurs little computation cost. 
Moreover, since there are not sequential dependencies between the sampled latent thinking trajectories, the sampling process in \LTO can be fully parallelized. 
Therefore, \LTO incurs only negligible additional inference cost, and its total inference time can be almost the same with direct sampling from the base model when parallel sampling is introduced.
\begin{table*}
\centering
\small
\caption{Comparison of the total training time and GPU memory usage of \LRM across different datasets for Llama-3-8B.}
\begin{tabular}{lccccc}
\toprule
                       & 
                       \textbf{GSM8K}& \textbf{CommonsenseQA}  & \textbf{MBPP} & \textbf{MATH} & \textbf{GPQA}
\\ \midrule 
Total Training Time 
(h)&1.21			&1.53&0.66&1.69&0.42\\
GPU Memory Usage (GB)&6.39				&6.40&6.39&6.39&6.39\\
\bottomrule
\end{tabular}
\label{tab:training_efficiency_analysis_llama3}
\end{table*}

\begin{table*}[t]
\centering
\newcommand{\newscriptspacesmall}{\makebox[\widthof{e-}]{}}
\newcommand{\newscriptspace}{\makebox[\widthof{e-0}]{}}
\newcommand{\newscriptspacebig}{\makebox[\widthof{e-00}]{}} 
\small
\caption{Comparison of the average computation time (seconds) of the base model inference and the latent reward computation per sample across different datasets for Llama-3-8B.}
\begin{tabular}{lccccc}
\toprule
                       & 
        \textbf{GSM8K}& \textbf{CommonsenseQA}  & \textbf{MBPP} & \textbf{MATH} & \textbf{GPQA}
\\ \midrule 
Based Model Inference&8.5&8.2&3.0&12.5&21.8\\
Latent Reward Computation&1.3e-2\newscriptspace&1.6e-2\newscriptspace&5.0e-3\newscriptspace&5.0e-3\newscriptspacesmall&5.0e-3\newscriptspacesmall
\\
\bottomrule
\end{tabular}
\label{tab:sampling_efficiency_analysis_llama3}
\end{table*}

\paragraph*{Efficiency Analysis on General LLMs}
{\LRM}s are highly efficient both for general LLMs and \LRLM due to its lightweight architecture with only 2 layers of Transformer encoder. Its training time is small because it only requires a small amount of training data and its inference time is orders of magnitude lower than the inference time of the base LLM. For example, as shown in Table~\ref{tab:training_efficiency_analysis} and Table~\ref{tab:sampling_efficiency_analysis_llama3}, for Llama-3-8B, on a single A100 GPU, \LRM only requires about 1.5 hrs to train and 6.4GB of memory usage, and its reward computation can be completed within about 0.02s which is negligible compared with the inference time of the base LLM.
\section{Limitation Statement}
\label{appendix:limitation_statement}
\paragraph*{Direct Optimization over the Latent Thinking Processes}Although \LTO is formulated as an optimization problem, it achieves the optimization objective by selectively sampling correct latent thinking trajectories that follow the optimized distribution, rather than directly modifying or updating the latent thinking policy of the base model. As with the common limitation of test-time scaling methods~\citep{gandhi2025cognitive,setlur2025scaling}, 
when the latent thinking policy of the base model diverges substantially from the optimized distribution (e.g., when the model lacks the problem-solving ability and generates latent thoughts and answers that are all incorrect), then \LTO cannot improve the latent thinking processes, since every generated latent thinking trajectory remains incorrect. To address this limitation, future works may integrate the reward signals from \LRM into a reinforcement learning-based preference optimization framework~\citep{rafailov2023direct}, enabling direct optimization and refinement of the latent thinking processes.

\paragraph*{Optimization with Multiple Reward Signals}The reward signals derived from \LTO are binary and can only indicate if the latent thinking processes will lead to the correct answer. This restricts reward modeling to the correctness of the answer but may not capture other important dimensions, such as safety or helpfulness. An interesting direction for future work is to investigate whether latent thoughts are separable along these additional dimensions, and to extend the latent classifier to incorporate these criteria for latent reward modeling. Another interesting direction is to extend \LTO into a multi-objective optimization framework~\citep{wang2025map}. This will enable simultaneous optimization of latent thinking processes across multiple reward dimensions and broaden its applicability to more general settings for reward optimization and preference alignment.
\section{Impact Statement}
\label{appendix:impact_statement}
Existing methods for reward modeling and thinking optimization in LLMs are primarily performed in the natural language space~\citep{wang-etal-2024-math,lu2024autopsv}, but may be costly and prone to the overthinking issue~\citep{sui2025stop}. Our research demonstrates that the latent representations of both \LRLM and general LLMs encode appropriate reward signals that can be directly leveraged to optimize the latent thinking processes. Furthermore, we show that reward modeling in the
latent space can generalize across domains and shows strong
potential for building a generalist reward model in the latent space.
Our results demonstrate that reward modeling and scaling test-time thinking with supervision~\citep{muennighoff2025s1,guo2025deepseek,setlur2025scaling} can be performed directly in the latent space, highlighting its potential as a general, efficient, and domain-agnostic approach to improving the thinking processes of LLMs. 

We do not aim to claim that latent reward modeling and latent thinking optimization are better than natural language-based reward modeling and verbal thinking optimization. Instead, we show that they offer efficient and effective alternatives in specific settings and open up promising directions for future works. For example, in resource-constrained settings where training computation is limited and inference efficiency is imperative, \LTO can effectively optimize the thinking processes of LLMs with low computation cost. We hope that our research motivates further exploration of reward modeling and thinking optimization in the latent space---a largely underexplored but highly promising direction for advancing scalable, efficient, and generalist LLM thinking and reasoning.
\section{Ethics Statement}
\label{appendix:ethics_statement}
All the datasets used in this research are from public open-access benchmark datasets, which are fully anonymized and do not contain sensitive or private information.
\section{Reproducibility Statement}\label{appendix:reproducibility_statement}
Calculation methods for the representation quality methods are provided in Appendix~\ref{appendix:representation_quality_metrics}. A complete proof of the theorems is provided in Appendix~\ref{appendix:thm_results}. The implementation details and the computational cost of the {\LRM}s are provided in Appendix~\ref{appendix:training_details_latent_classifier} and Appendix~\ref{appendix:efficiency_analysis}, respectively. The implementation details of the baseline methods are provided in Appendix~\ref{appendix:implementation_details}. Our code and datasets are available at \href{https://github.com/ninglab/LTO}{this link}.
\section{Usage of Large Language Model}
We do not use large language models as contributors to generate any part of the content or write the paper. Large language models are only used as the investigation object of our study (e.g., observe how large language models think in the latent space).
\end{document}